\documentclass[10pt,twocolumn,letterpaper]{article}
\usepackage{cvpr} %
\usepackage[accsupp]{axessibility}
\usepackage{amsthm}
\usepackage{stfloats}
\usepackage{multirow}
\usepackage{array}
\usepackage{enumitem}
\usepackage{colortbl}
\usepackage{makecell}
\usepackage{threeparttable}
\newtheorem{theorem}{Theorem}
\newtheorem{observation}{Observation}
\usepackage{xcolor}
\newcommand{\red}[1]{{\color{red}#1}}

\def\ie{{\textit{i}.\textit{e}.}}
\def\etal{{{et al}.}}
\newcommand\Bstrut{\rule[-0.9ex]{0pt}{0pt}}   %
\usepackage{marvosym}

\definecolor{cvprblue}{rgb}{0.21,0.49,0.74}
\usepackage[pagebackref,breaklinks,colorlinks,citecolor=cvprblue]{hyperref}

\title{Defense Against Adversarial Attacks on No-Reference Image Quality Models with Gradient Norm Regularization}

\author{
Yujia Liu$^{1,2}$\footnotemark[1], Chenxi Yang$^{3,1}$\footnotemark[1], Dingquan Li$^4$, Jianhao Ding$^{1,2}$, Tingting Jiang$^{1,2}$\textsuperscript{\Letter} \\
$^1$NERCVT, School of Computer Science, Peking University, China \\
$^2$National Key Laboratory for Multimedia Information Processing, Peking University, China  \\
$^3$School of Mathematical Sciences, Peking University, China
$^4$Peng Cheng Laboratory, China \\
{\tt\small \{yujia\_liu,dingquanli,ttjiang\}@pku.edu.cn; \{yangchenxi,djh01998\}@stu.pku.edu.cn}
}

\begin{document}
\maketitle
\renewcommand{\thefootnote}{\fnsymbol{footnote}}
\footnotetext[1]{Equal contribution}
\begin{abstract}
The task of No-Reference Image Quality Assessment (NR-IQA) is to estimate the quality score of an input image without additional information. NR-IQA models play a crucial role in the media industry, aiding in performance evaluation and optimization guidance. However, these models are found to be vulnerable to adversarial attacks, which introduce imperceptible perturbations to input images, resulting in significant changes in predicted scores. In this paper, we propose a defense method to improve the stability in predicted scores when attacked by small perturbations, thus enhancing the adversarial robustness of NR-IQA models. To be specific, we present theoretical evidence showing that the magnitude of score changes is related to the $\ell_1$ norm of the model's gradient with respect to the input image. Building upon this theoretical foundation, we propose a norm regularization training strategy aimed at reducing the $\ell_1$ norm of the gradient, thereby boosting the robustness of NR-IQA models. Experiments conducted on four NR-IQA baseline models demonstrate the effectiveness of our strategy in reducing score changes in the presence of adversarial attacks. To the best of our knowledge, this work marks the first attempt to defend against adversarial attacks on NR-IQA models. Our study offers valuable insights into the adversarial robustness of NR-IQA models and provides a foundation for future research in this area.
\end{abstract}
    
\vspace{-0.2in}
\section{Introduction}\label{sec:intro}

\begin{figure}[!t]
    \centering
    \includegraphics[width=0.9\linewidth]{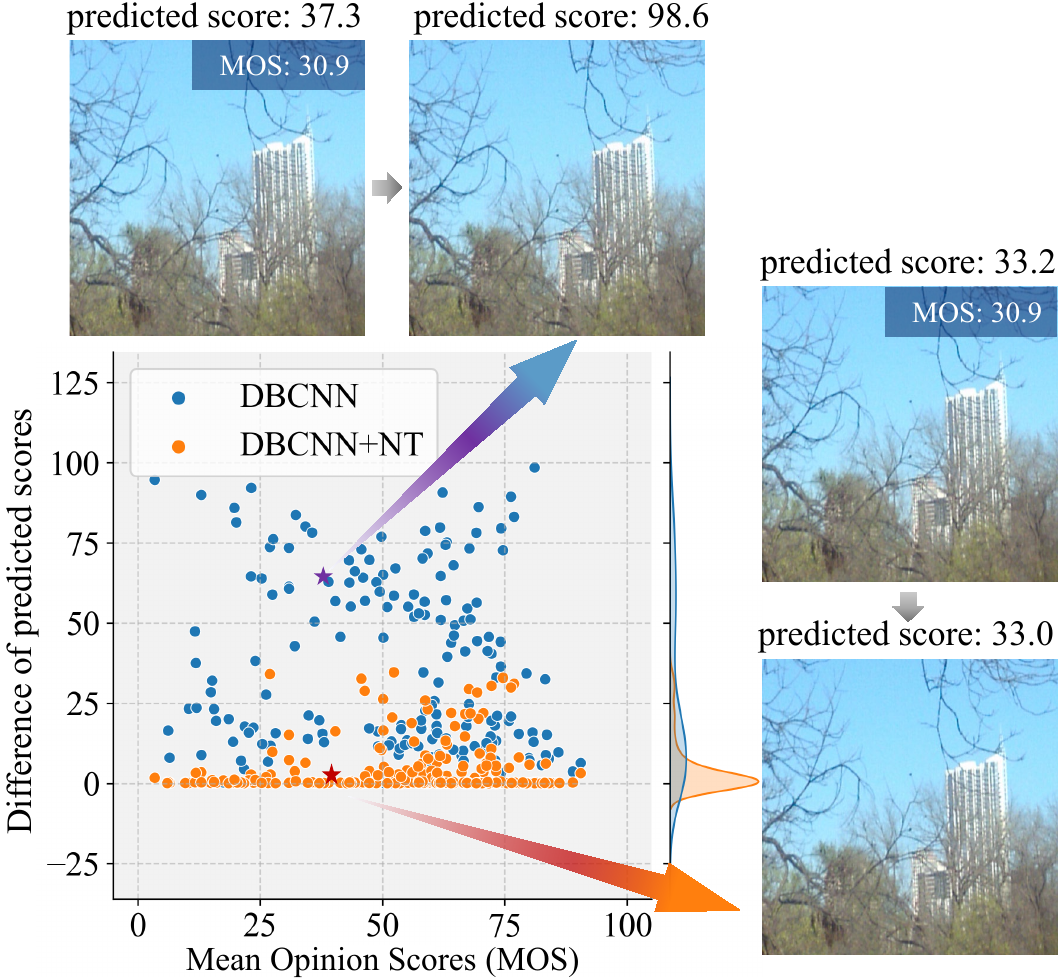}
    \caption{Comparison of DBCNN~\cite{2020_TCSVT_DBCNN} trained with and without the proposed Norm regularization Training (NT) strategy under the Perceptual Attack~\cite{2022_NIPS_Zhang_PAttack} using the same setting. The absolute differences between predicted scores before and after the attack ($\vert s_\text{after}-s_\text{before}\vert$) for all test images are presented, with the fitted distribution displayed on the right side of the picture. An example is shown with predicted scores before and after the attack (zoom in for a better view). It is evident that DBCNN+NT exhibits smaller score changes compared to the baseline model.}
    \label{intro_sca}
\end{figure}
\setlength{\textfloatsep}{2pt}

Deep Neural Networks (DNNs) have demonstrated remarkable performance across various domains~\cite{2016_CVPR_He_ResNet,2021_CompSur_Khan_AutoDrive,2021_NeuroComput_Mishra_Diagnosis}, and Image Quality Assessment (IQA) is one of them. IQA aims to predict the quality of images consistent with human perception. And it could be categorized as Full-Reference~(FR) and No-Reference~(NR) according to the access to the reference images. While FR-IQA models specialize in assessing the perceptual disparities between two images, NR-IQA models focus on estimating a quality score for a single input image. The importance of IQA extends to many applications such as image transport systems~\cite{2023_TCSVT_Fu_Compression}, 
image in-painting~\cite{2019_IJCV_Mariko_Inpaint} and so on~\cite{2017_MM_Ma_VideoQual,2019_ICCV_SuperResolut,2019_ICCV_Oren_VideoComp}. %
Leveraging the capabilities of DNNs, recent IQA models have achieved remarkable consistency with human opinion scores~\cite{2020_CVPR_hyperIQA}.

However, the reliability of DNNs is challenged since they are found to be susceptible to adversarial perturbations. Attackers would mislead DNNs to make decisions inconsistent with human perception by adding carefully designed perturbations to inputs. This manipulation technique is called the adversarial attack, and the perturbed inputs are called adversarial examples. The initial discovery of DNNs' vulnerability to adversarial attacks was in the context of classification tasks~\cite{2014_ICLR_Goodfellow_ProposeAdv}. Subsequently, the threats of adversarial attacks are explored in various tasks, including object detection~\cite{2019_CVPR_Simen_AdvPatch}, segmentation~\cite{2019_MICCAI_Utku_AttackImageSegmentation}, natural language processing~\cite{2020_ICLR_Zhu_AdvInNLP}, and many others~\cite{2022_ICLR_Hadi_TransferInASRs,2021_ICML_Liang_ConnectionsBetweenAdvTransAndKnowledgeTrans}. %

Recently, adversarial attacks on IQA models have garnered significant attention. Several attack methods targeting NR-IQA models have been proposed, where attackers aim to significantly change the predicted scores with small adversarial perturbations to input images. For instance, \citet{2022_NIPS_Zhang_PAttack} generated adversarial examples using the Lagrange multiplier method, imposing several constraints on the quality of adversarial examples. 
Besides, Shumitskaya~\etal~\cite{2022_BMVC_Ekarerina_UAP,2023_ICLRt_Ekaterina_FACPA} and Korhonen~\etal~\cite{2022_QEVMAw_Korhonen_BIQA} trained an individual model to generate adversarial examples.

However, despite these proposed attack techniques highlighting vulnerabilities in NR-IQA models, no methods have been put forth to defend against attacks and improve the adversarial robustness of NR-IQA methods. Training robust IQA models is essential for improving the reliability of these models in real-world applications. 
For instance, in online advertising, the quality of ad images can significantly impact viewer engagement. Adversarial attacks on NR-IQA metrics could result in low-quality ad images being rated highly or high-quality ad images being rated lowly, as cases shown in Figure~\ref{fig:score_based_examples}, potentially reducing the effectiveness of online advertising campaigns.
Therefore, there is an impending need to train robust NR-IQA models, which is crucial for ensuring both the reliability and security of NR-IQA models in practical applications.

\begin{figure*}[!t]
    \centering
    \includegraphics[width=0.85\linewidth]{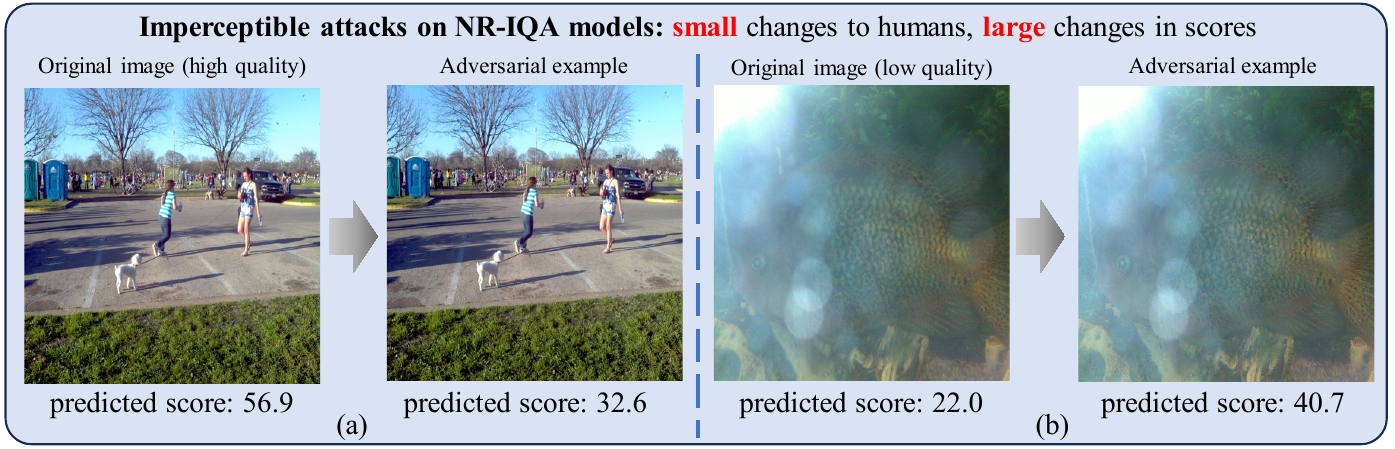}
    \caption{(Zoom in for a better view) Examples of adversarial attacks on the DBCNN~\cite{2020_TCSVT_DBCNN} model. The range of MOS is $[0,100]$.}
    \label{fig:score_based_examples}
\end{figure*}

In this paper, we propose a defense method to improve the adversarial robustness of NR-IQA models in terms of reducing the quality score changes before and after adversarial attacks, which is supported by both theoretical foundations and empirical evidence.
We analyze existing attacks on NR-IQA models
and establish a theoretical foundation demonstrating the strong relationship between the adversarial robustness of an NR-IQA model $f$ and the $\ell_1$ norm of its gradient $\nabla_x f(x)$ concerning the input image $x$. We found that for an NR-IQA model, a smaller $||\nabla_x f(x)||_1$ implies a more robust model.
Drawing upon the theoretical analysis, we propose the regularization of the gradient's $\ell_1$ norm to enhance the adversarial robustness of NR-IQA models. A direct way to regularize $\Vert \nabla_x f(x) \Vert_1$ is adding it to the loss function in the training phase, which needs double backpropagation to compute the gradient of the regularization term with respect to model parameters considering the calculation mechanism of DNNs. However, double backpropagation is not currently scalable for large-scale DNNs~\cite{2019_ICLR_Ilyas_EstimateGradient}.
Therefore, we approximate $\Vert \nabla_x f(x) \Vert_1$ by finite differences~\cite{finite_difference} instead of using it directly. The approximation result is utilized as the regularization term, effectively constraining $\Vert \nabla_x f(x) \Vert_1$. 

To further verify our methodology,
we conduct experiments on four baseline NR-IQA models and four attack methods. The results show the effectiveness of the norm regularization strategy in boosting baseline models' robustness against adversarial attacks. %
To the best of our knowledge, this is the first work to propose a defense method against adversarial attacks on NR-IQA models, which uses the $\ell_1$ norm of the gradient as a regularization term. This paper establishes a theoretical connection between the robustness of the NR-IQA model against adversarial attacks and the gradient norm with respect to the input image.
To support reproducible scientific research, we release the code at \url{https://github.com/YangiD/DefenseIQA-NT}.

\vspace{-0.1in}
\section{Related Work}
Adversarial attacks were first studied in classification tasks, so we introduce these attacks along with defense methods in Sec.~\ref{sec:attack_class}. Sec.~\ref{sec:NR_IQA} and Sec.~\ref{sec:attack_IQA} provide a brief overview of NR-IQA models and attacks on these models.

\subsection{Adversarial Attacks and Their Defenses in Classification Tasks}\label{sec:attack_class}
Based on the available knowledge of the target model, adversarial attacks can be divided into white-box attacks and black-box attacks. In white-box scenarios, attackers possess comprehensive knowledge about the target model.
Some classic attacks treated the problem of generating adversarial examples as an optimization task~\cite{2015_ICLR_Goodfellow_FGSM,2017_SP_Carlini_CW,2016_CVPR_Moosavi_DeepFool}.
Alternatively, some attacks proposed to train a new model to generate adversarial examples~\cite{2018_ICLR_Zhao_NaturalGAN,2018_IJCAI_Xiao_AdvGAN,2019_AAAI_Liu_PS-GAN}.
Conversely, in the case of black-box attacks, attackers are restricted to accessing only the output of the target model.
A predominant strategy for executing black-box attacks involves generating adversarial examples on a known source model and subsequently transferring them to the target model~\cite{2019_CVPR_Cihang_DI-FGSM,2019_CVPR_YiranChen_AA,2022_CompSec_Liu_LowFreTransAdv}.

To defend against adversarial attacks in classification tasks, a widely used method is adversarial training~\cite{2014_ICLR_Goodfellow_ProposeAdv} and its variants~\cite{2018_ICLR_Goodfellow_EnsembleAttack,2019_NIPS_Zhang_YOPO,2019_NIPS_Ali_FreeAT,2020_ICLR_Zhu_AdvInNLP}. Adversarial training involves the generation of adversarial examples using specific attacks, which are incorporated into the training dataset so that the model can learn from these adversarial examples in the training phase. It acts as a form of data augmentation and helps to improve the robustness of the network. 

\subsection{IQA Tasks and Models}\label{sec:NR_IQA}
IQA tasks aim to predict image quality scores consistent with human perception (\ie, Mean Opinion Score, MOS for short), which could be divided into FR and NR. For FR-IQA, it involves comparing a distorted image and its reference image to predict the quality score of the distorted image. Due to the difficulty of obtaining reference images in some authentic scenes, NR-IQA proposes to predict the quality score with only the distorted image.

NR-IQA methods extract features related to human perception of image quality. 
Some methods~\cite{DIIVINE_2011, BRISQUE_2012} considered the hand-craft feature from Natural Scene Statistics. Further works explored the impact of image semantic information on human perception of image quality. HyperIQA~\cite{2020_CVPR_hyperIQA} used a hypernetwork to obtain different quality estimators for images with different content. DBCNN~\cite{2020_TCSVT_DBCNN} extracted distorted information and semantic information of images by two independent neural networks and combined them with a bi-linear pooling. LinearityIQA~\cite{2020_MM_LinearityIQA} proposed the normalization of scores in the loss function for faster convergence of the model. 
Meanwhile, some methods explored the effectiveness of different network architectures. MANIQA~\cite{2022_CVPRw_MANIQA} and MUSIQ~\cite{Ke_2021_ICCV_MUSIQ} utilized vision transformers~\cite{dosovitskiy2020vit} and verified their effectiveness in NR-IQA tasks.

\subsection{Adversarial Attacks on NR-IQA Models}
\label{sec:attack_IQA}
The issue of adversarial attacks within the context of IQA tasks has garnered some attention, although research in this area remains somewhat limited.
Recently, some attack methods have been designed for NR-IQA models.

In white-box scenarios, the Perceptual Attack~\cite{2022_NIPS_Zhang_PAttack} modeled NR-IQA attacks as an optimization problem, where it employed the Lagrange multiplier method to solve this optimization problem. Perceptual Attack had tried different constraints on the image quality of adversarial examples, including the Chebyshev distance, LPIPS~\cite{2018_CVPR_Richard_LPIPS}, SSIM~\cite{2004_TIP_Wang_SSIM} and DISTS~\cite{2020_TPAMI_Ding_DISTS}. \citet{2022_BMVC_Ekarerina_UAP} proposed to update a universal perturbation through a set of images and added it to clean images to attack NR-IQA models.

In black-box scenarios, the Kor. Attack~\cite{2022_QEVMAw_Korhonen_BIQA} adapted ideas from attacks in classification tasks, creating adversarial examples through ResNet50~\cite{2016_CVPR_He_ResNet} and transferring them to the unknown target model.
Likewise, 
\citet{2023_ICLRt_Ekaterina_FACPA} proposed to train a U-Net~\cite{2015_MICCAI_Olaf_UNet} to generate different adversarial perturbations for each image. 

Although Korhonen~\etal~\cite{2022_QEVMAw_Korhonen_BIQA} adapted basic defense mechanisms from classification models to NR-IQA models, they did not investigate defense methods that are specifically designed for NR-IQA tasks.

\renewcommand{\thefootnote}{\arabic{footnote}}
\section{Preliminary}

\subsection{Definition of Attacks on NR-IQA Models}\label{sec:method_definition}
Adversarial attacks on NR-IQA models aim to manipulate the predicted score of an input image $x$ by an NR-IQA model $f$ so that the objective score by models is inconsistent with the subjective score by humans.
As for a successful attack on an NR-IQA model, there are imperceptible differences to the human eye between original images and adversarial examples, but these subtle perturbations result in large changes in the predicted scores generated by NR-IQA models. Examples are shown in Figure~\ref{fig:score_based_examples} (a) and (b). This attack can be mathematically described as follows:
\begin{equation}
\label{eq:attack_definition}
    \max \ |f(x+\delta) - f(x)|, \ \text{s.t.} \ D(x+\delta,x) \leqslant \epsilon, %
\end{equation}
where $\delta$ symbolizes the perturbation added to $x$, while the function $D(\cdot,\cdot)$ quantifies the perceptual distance between two images, and $\epsilon$ characterizes the tolerance of human eyes for image differences.
An assumption is that when $\ D(x+\delta,x) \leqslant \epsilon$, the subjective score of $x+\delta$ is the same as $x$.
In our methodology, we take $D(\cdot,\cdot)$ as defined by:
\begin{equation}
    D(x+\delta, x) = \Vert \delta \Vert_\infty,
\end{equation}
for the convenience of our theoretical analysis.
Moreover, it has been used in attacks for IQA tasks~\cite{2022_NIPS_Zhang_PAttack,2022_QEVMAw_Korhonen_BIQA}.

\subsection{Robustness Evaluations of NR-IQA Models}\label{sec:metric}
The adversarial robustness of NR-IQA measures the stability of the NR-IQA model to imperceptible perturbations of input images generated by attacks. For example, when the original image and its adversarial example have the same appearance, 
an NR-IQA model should give both the same quality scores.
Figure~\ref{fig:score_based_examples} presents instances where DBCNN fails in this aspect.
Researchers tend to assess the adversarial robustness of NR-IQA models by evaluating their IQA performance on adversarial examples~\cite{2022_QEVMAw_Korhonen_BIQA,2022_NIPS_Zhang_PAttack}.
A better performance implies that the model is more robust.

Typically, the performance of an NR-IQA model is measured using four metrics: Root Mean Square Error (RMSE), Pearson's Linear Correlation Coefficient (PLCC), Spearman Rank-Order Correlation Coefficient (SROCC), and Kendall Rank-Order Correlation Coefficient (KROCC).\footnote{Formulations of these metrics are in the supplementary material.} RMSE and PLCC are indicators of prediction accuracy, while SROCC and KROCC assess the prediction monotonicity~\cite{VQEG_2000}. When an NR-IQA model is attacked, greater robustness is indicated by smaller RMSE and larger PLCC, SROCC, and KROCC values. In this paper, we provide a theoretical analysis of robustness in terms of RMSE, and test all these metrics in the experimental part.

Recently, some IQA-specific metrics were proposed for evaluating the robustness of NR-IQA models~\cite{2022_NIPS_Zhang_PAttack,antsiferova2023comparing}. We will discuss these metrics in the supplementary material.

\section{Methodology}\label{sec:method}
In this section, we offer a theoretical exposition on improving NR-IQA models' adversarial robustness in terms of the magnitude of changes in predicted scores. We show that the robustness can be enhanced by regularizing the $\ell_1$ norm of the gradient. We also propose a method for training a robust NR-IQA model using the norm regularization method.

\subsection{Why to Regularize Gradient Norm?}\label{sec:method_why}
In this subsection, we will outline the theoretical foundations regarding the relationship between the robustness in terms of score changes and the $\ell_1$ norm of the gradient. It raises the necessity of regularizing the $\ell_1$ norm of the input gradient of the predicted score.
We prove that the magnitude of changes in predicted scores can be effectively approximated by the $\ell_1$ norm of $\nabla_x f(x)$, with the assumption that the $\ell_\infty$ norm of perturbations is bounded.

\begin{theorem}
\label{thr:l1}
Suppose $f$ represents an NR-IQA model, $\epsilon$ is the strength of an attack, and $x$ denotes an input image. The maximum change in predicted scores of $x$ by $f$ against $\ell_\infty$-bounded attacks is highly correlated to $\Vert \nabla_x f(x) \Vert_1$, which can be formulated as
\begin{equation}\label{eq:theorem1}
    \sup_{\delta:\Vert \delta \Vert_\infty \leqslant \epsilon} |f(x+\delta) - f(x)| \approx \epsilon \Vert \nabla_x f(x)\Vert_1.
\end{equation}
\end{theorem}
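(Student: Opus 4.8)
The plan is to linearize $f$ around $x$ and then recognize the resulting maximization as an instance of the duality between the $\ell_\infty$ and $\ell_1$ norms. Assuming $f$ is differentiable at $x$, a first-order Taylor expansion gives, for any perturbation $\delta$,
\[
f(x+\delta)-f(x)=\nabla_x f(x)^\top\delta+R(x,\delta),
\]
where $R(x,\delta)$ is a remainder that is second order in $\delta$. The heart of the argument is that, when $\epsilon$ is small, the linear term $\nabla_x f(x)^\top\delta$ dominates, so that $\sup_{\delta:\|\delta\|_\infty\leqslant\epsilon}|f(x+\delta)-f(x)|$ is governed by $\sup_{\delta:\|\delta\|_\infty\leqslant\epsilon}|\nabla_x f(x)^\top\delta|$.

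Next I would evaluate this linear program exactly. By H\"older's inequality, $|\nabla_x f(x)^\top\delta|\leqslant\|\nabla_x f(x)\|_1\,\|\delta\|_\infty\leqslant\epsilon\|\nabla_x f(x)\|_1$ for every feasible $\delta$, with equality attained by the explicit choice $\delta^\star=\epsilon\,\mathrm{sign}(\nabla_x f(x))$ taken coordinatewise; hence $\sup_{\delta:\|\delta\|_\infty\leqslant\epsilon}|\nabla_x f(x)^\top\delta|=\epsilon\|\nabla_x f(x)\|_1$. To promote this to a statement about $f$ itself, I would bound the remainder: if $f$ is twice differentiable with Hessian operator norm at most $L$ on the $\epsilon$-ball around $x$ (equivalently, $\nabla_x f$ is $L$-Lipschitz there), Taylor's theorem gives $|R(x,\delta)|\leqslant\frac{1}{2}L\|\delta\|_2^2\leqslant\frac{1}{2}Ld\,\epsilon^2$ for $\delta$ in the cube, with $d$ the input dimension. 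Evaluating $|f(x+\delta)-f(x)|$ at $\delta^\star$ for a lower bound and using the remainder estimate for an upper bound yields
\[
\bigl|\,\sup_{\delta:\|\delta\|_\infty\leqslant\epsilon}|f(x+\delta)-f(x)|-\epsilon\|\nabla_x f(x)\|_1\,\bigr|\leqslant\tfrac{1}{2}Ld\,\epsilon^2,
\]
so the two sides agree up to an $O(\epsilon^2)$ error, which is negligible next to the $\Theta(\epsilon)$ leading term; this is the precise content of the ``$\approx$'' in the theorem statement.

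The step I expect to be the main obstacle is making the remainder control legitimate for the models actually used: modern NR-IQA networks are only piecewise smooth (ReLU activations), so a global Hessian bound need not exist. I would handle this either locally --- on the affine piece containing the $\epsilon$-ball one has $R\equiv 0$, and the approximation becomes an exact identity --- or by adopting a smoothness surrogate and absorbing the resulting $O(\epsilon^2)$ term, consistent with the empirical regime of small, imperceptible perturbations. A minor additional subtlety is that the supremum over the cube need not be attained at $\delta^\star$; the two-sided bound above sidesteps this, since the lower bound requires only one feasible point while the upper bound holds uniformly over the cube.
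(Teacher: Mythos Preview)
Your proposal is correct and follows essentially the same approach as the paper: linearize $f$ via a first-order Taylor expansion and identify the maximizer $\delta^\star=\epsilon\,\mathrm{sign}(\nabla_x f(x))$ via the $\ell_\infty$--$\ell_1$ duality. Your version is in fact more careful than the paper's, which gives only the heuristic Taylor step and asserts the maximizer without invoking H\"older or bounding the remainder; your explicit $O(\epsilon^2)$ error control and discussion of piecewise-linear (ReLU) networks go beyond what the paper provides.
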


\begin{proof}
To begin, we apply the first-order Taylor expansion to the function $f(x+\delta)$ in the vicinity of $x$, yielding:
\begin{equation}
    f(x+\delta) \approx f(x) + \delta^T \nabla_x f(x).
\end{equation}
Then, $ |f(x+\delta) - f(x)| \approx |\delta^T \nabla_x f(x)|$. Meanwhile, $|\delta^T \nabla_x f(x)|$ has the maximum value when $\delta = \epsilon\cdot \text{sign}(\nabla_x f(x))$, and this leads to Eq.~\eqref{eq:theorem1}.
\end{proof}

This theorem establishes the connection between changes in predicted scores and the $\ell_1$ norm of the gradient.
According to Theorem~\ref{thr:l1}, suppose the strength of attacks $\epsilon$ is fixed, then the extent of score changes is primarily determined by the $\ell_1$ norm of the gradient $\nabla_x f(x)$.
In practical terms, this signifies that the regularization of $\Vert \nabla_x f(x) \Vert_1$ will lead to smaller fluctuations in predicted scores and thereby improve the adversarial robustness of $f$ against imperceptible attacks.

\subsection{How to Regularize Gradient Norm?}\label{sec:method_how}
To train a robust NR-IQA model incorporating gradient norm regularization, a direct way is to add the $\ell_1$ norm of gradients to the loss function, \ie,
\begin{equation}
    L(f,x) = L_{\text{IQA}}(f,x) + \lambda \cdot \Vert \nabla_x f(x) \Vert_1^2.
\end{equation}
The loss function $L(f,x)$ comprises two components: the loss $L_{\text{IQA}}$ tailored to the specific NR-IQA task, and the norm regularization term with a positive weight $\lambda$.

However, directly adding the term $\Vert \nabla_x f(x) \Vert_1$ to the loss function leads to the requirement of double backpropagation for computing the gradient of this term with respect to model parameters, which is time-consuming and currently not suitable for large-scale DNNs~\cite{2021_Chris_GradientApproximation}. Therefore, we employ an approximation technique for the regularization term. Drawing inspiration from the methodology presented in the work~\cite{2019_ICLR_Ilyas_EstimateGradient}, we leverage the finite difference~\cite{finite_difference} technique to estimate $\Vert \nabla_x f(x) \Vert_1$, \ie,
\begin{equation}
    \Vert \nabla_x f(x) \Vert_1 \approx 
    \left| \frac{
        f(x+h\cdot d) - f(x)
        }{
        h
        } \right|,
\label{eq:approx_l1}
\end{equation}
where $h\in\mathbb{R}^{+}$ is the step size and $d=\text{sign}(\nabla_x f(x))$.

Finally, the loss function with the regularization of the $\ell_1$ norm of the gradient is as follows:
\begin{equation}
    L(f,x) = L_{\text{IQA}}(f,x) + \lambda \cdot \left| \frac{
        f(x+h\cdot d) - f(x)
        }{
        h
        } \right|^2.
\label{eq:training_loss}
\end{equation}

\section{Experiments}
\label{sec:exper}
In this section, we present extensive experiments conducted on various NR-IQA baseline models to validate the efficacy of our proposed Norm regularization Training (NT) strategy.
We briefly overview our experimental setup in Sec.~\ref{sec:exper_set}. Subsequently, in Sec.~\ref{sec:ex_robust_performance}, we demonstrate the enhancement in robustness achieved by the NT strategy against a diverse set of attacks.
Furthermore, we illustrate the role of the finite difference approximation in reducing the $\ell_1$ norm of the gradients (Sec.~\ref{sec:ex_norm}), as well as the relationship between attack intensity and robustness (Sec.~\ref{sec:ex_intensity}). We also perform ablation studies on hyperparameters $\lambda$ and $h$ in Sec.~\ref{sec:ex_ablation}. 

\subsection{Experimental Settings}
\label{sec:exper_set}
Experiments were carried out on the popular LIVEC dataset~\cite{2015_TIP_LIVEC}. We randomly selected 80\% of the images for training and the remaining 20\% for testing and attacks.

Our experiments to assess the robustness of NR-IQA models are structured along three key dimensions, as depicted in Figure~\ref{fig:exper_setting}. The first dimension revolves around the choice of the baseline models. We evaluate our NT strategy on four prominent NR-IQA baseline models: HyperIQA~\cite{2020_CVPR_hyperIQA}, DBCNN~\cite{2020_TCSVT_DBCNN}, LinearityIQA~\cite{2020_MM_LinearityIQA}, and MANIQA~\cite{2022_CVPRw_MANIQA}. Each of these baseline models is referred to as ``baseline,'' while the models trained with NT are denoted as ``baseline+NT.'' The NT strategy is applied to HyperIQA with the weight $\lambda=0.001$, DBCNN, and LinearityIQA with $\lambda=0.0005$ and MANIQA with $\lambda=0.003$.
For all models, the step size $h=0.01$.
Further training settings are provided in the supplementary material.

\begin{figure}[!th]
    \centering
    \includegraphics[width=0.95\linewidth]{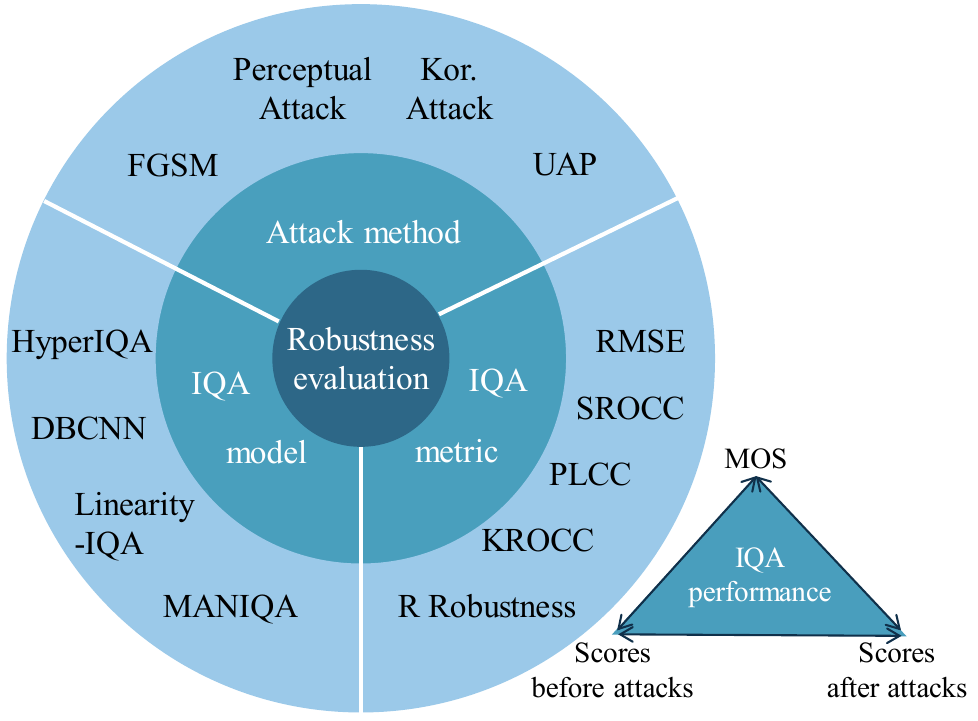}
    \caption{Three dimensions in experimental settings.}
    \label{fig:exper_setting}
\end{figure}

The second dimension involves the selection of attack methods. We employ four attack methods designed for NR-IQA tasks. These attacks include two white-box attacks: FGSM\footnote{FGSM was originally designed for classification tasks, but we modify its loss to for NR-IQA tasks (refer to the supplementary material).}~\cite{2015_ICLR_Goodfellow_FGSM} and Perceptual Attack~\cite{2022_NIPS_Zhang_PAttack}, as well as two black-box attacks: UAP\footnote{UAP is proposed as a white-box attack. We employ its perturbation generated on PaQ-2-PiQ model~\cite{Ying_2020_CVPR}, and serve UAP as a black-box attack.}~\cite{2022_BMVC_Ekarerina_UAP} and Kor. Attack~\cite{2022_QEVMAw_Korhonen_BIQA}.
To ensure fairness in our evaluations, each attack method uses the same setting (\ie, employing the same hyperparameters in attack) when targeting different models. We set different hyperparameters for different attacks and ensure the majority of attacked images’ SSIM~\cite{2004_TIP_Wang_SSIM} was above 0.9 to satisfy the assumption that the MOS is the same for both images before and after the attack. 
Detailed hyperparameter information for these attacks can be found in the supplementary material. 
Black-box attacks are indicated with an asterisk in the tables of this paper.

The third dimension pertains to the evaluation metrics for NR-IQA models. 
As detailed in Sec.~\ref{sec:metric}, we follow the evaluations in previous works and consider four metrics in this paper, \ie, RMSE, PLCC, SROCC, and KROCC.
Additionally, we incorporate the $R$ robustness~\cite{2022_NIPS_Zhang_PAttack} into our analysis. This metric is proposed to assess the model robustness by measuring the relative score changes before and after attacks.
Formulations of these metrics are shown in the supplementary material.
Except for $R$ robustness, other metrics are conventionally computed by comparing the predicted scores by models against MOS provided by humans.\footnote{We normalize MOS to $[0,100]$ for a straightforward comparison.} 
In our evaluation, we extend the analysis of these metrics to assess predicted scores by a model both before and after attacks, since attackers only possess prediction scores before attacks.
Notably, in accordance with the attack definition in Eq.~\eqref{eq:attack_definition}, the NT strategy primarily focuses on the magnitude of changes in predicted scores between predicted scores before and after attacks. It could be measured by RMSE between predicted scores before and after attacks.

\subsection{Robustness Improvement}\label{sec:ex_robust_performance}
In this subsection, we present the performance of NR-IQA models on unattacked images 
(where R robustness is not applicable),
as well as their adversarial robustness against different attack methods.
Our experimental results are summarized into four key observations.
We provide additional analysis of adversarial robustness improvement in the supplementary material.

\begin{table}[!th]
\caption{Performance of NR-IQA models on unattacked images (``baseline $\big /$ baseline+NT''). \textbf{Bold} denotes better value in a cell.}
\centering
\renewcommand\arraystretch{1.2}
\resizebox{\hsize}{!}{
\begin{tabular}{lcccc}
\toprule
       & {\makecell[c]{HyperIQA \\ base / +NT }}            & {\makecell[c]{DBCNN \\ base / +NT }}    & {\makecell[c]{LinearityIQA \\ base / +NT }}    & {\makecell[c]{MANIQA \\ base / +NT }} \\ \midrule
RMSE$\downarrow$  & \textbf{9.913} $\big/$ 12.575& \textbf{10.897} $\big/$ 13.140 & \textbf{12.730} $\big/$ 13.173 &    26.082   $\big/$    \textbf{23.830}    \\
SROCC$\uparrow$ & \textbf{0.899 }$\big/$ 0.859& \textbf{0.866} $\big/$ 0.856  & \textbf{0.832} $\big/$ 0.820   &    \textbf{0.876}$\big/$   0.871       \\
PLCC$\uparrow$  & \textbf{0.916} $\big/$ 0.868& \textbf{0.892} $\big/$ 0.849  & \textbf{0.840} $\big/$ 0.827   &   0.870  $\big/$   \textbf{0.876}      \\
KROCC$\uparrow$ & \textbf{0.728} $\big/$ 0.670& \textbf{0.688} $\big/$ 0.666  & \textbf{0.641} $\big/$ 0.627   &   \textbf{0.696}  $\big/$  0.692   \\
\bottomrule
\end{tabular}
}
\label{tab:clean_image}
\end{table}

\begin{observation}
    The NT strategy results in a slight decrease in the performance of NR-IQA models on clean images.
\end{observation}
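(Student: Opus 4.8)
The plan is to establish this observation empirically, since it is a statement about measured performance rather than a formal inequality. The evidence lives in Table~\ref{tab:clean_image}, which reports the four standard NR-IQA metrics (RMSE, SROCC, PLCC, KROCC) for each of the four baselines and its NT-trained counterpart, all evaluated on the \emph{unattacked} 20\% test split of LIVEC. First I would fix the reading conventions -- smaller RMSE and larger SROCC, PLCC, KROCC indicate better clean-image behavior -- and then compare the ``base'' and ``+NT'' entries cell by cell.

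The core of the argument is that cell-by-cell comparison together with a quantification of the word ``slight.'' For HyperIQA, DBCNN, and LinearityIQA the baseline value is better on all four metrics, so NT does cost some clean-image accuracy and monotonicity; for MANIQA the picture is mixed, with NT slightly improving RMSE and PLCC while slightly worsening SROCC and KROCC. To justify ``slight,'' I would report the worst-case gaps across all models: the largest SROCC drop is about $0.04$, the largest PLCC drop about $0.05$, the largest KROCC drop about $0.06$, and RMSE rises by at most roughly $2.7$ on the normalized $[0,100]$ scale -- all small relative to the absolute performance levels and comparable to the run-to-run variation expected on a dataset of this size. A brief supporting remark is that regularizing $\Vert \nabla_x f(x)\Vert_1$ through the finite-difference surrogate in Eq.~\eqref{eq:training_loss} shrinks the effective hypothesis class by penalizing sharp input--output sensitivity, so a mild reduction in the ability to fit the clean regression target is exactly what one would anticipate.

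The main obstacle is making ``slight'' non-arbitrary and accommodating the MANIQA exception, which breaks the otherwise-uniform direction of the effect. The cleanest resolution is to phrase the claim as holding for the majority of baselines (three of four exhibit a uniform, small drop on every metric), to use the worst-case degradation figures above as the quantitative meaning of ``slight,'' and to defer the cost--benefit justification -- that this small clean-image price buys a substantial robustness gain -- to the attacked-image results presented later in Sec.~\ref{sec:ex_robust_performance}. With those three ingredients the observation follows directly from Table~\ref{tab:clean_image}.
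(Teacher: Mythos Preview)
Your proposal is correct and takes essentially the same approach as the paper: both justify the observation empirically by reading off Table~\ref{tab:clean_image} and noting that the NT-trained models show a small degradation in clean-image metrics relative to the baselines. The paper's own discussion is actually briefer than yours---it does not quantify the worst-case gaps or single out the MANIQA exception, but it does add one point you omit: an analogy to adversarial defenses in classification, where a clean-accuracy drop is a well-documented side effect, which it uses to contextualize the decline as expected and acceptable.
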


The performance of both the baseline models and their NT-enhanced versions on unattacked images are shown in Table~\ref{tab:clean_image}.
These metrics are calculated between MOS values and predicted scores on unattacked images.
We can see that the NT strategy leads to a slight decrease in RMSE, SROCC, PLCC, and KROCC compared with baseline models.
Similar trends were reported in the context of classification models that defense methods would cause a decline in classification accuracy on clean images.~\cite{2014_ICLR_Goodfellow_ProposeAdv,2018_ICLR_Goodfellow_EnsembleAttack,2019_NIPS_Zhang_YOPO}.
These findings suggest that enhanced robustness is often achieved at the cost of reducing performance on unattacked images.
Nonetheless, the performance decline induced by the NT strategy is marginal and well within acceptable limits.

\begin{table*}[!t]
\caption{The \textbf{RMSE$\downarrow$} metric of NR-IQA models against attacks (``baseline $\big /$ baseline+NT''). \textbf{Bold} denotes better value in a cell.}
\centering
\renewcommand\arraystretch{1.3}
\resizebox{\textwidth}{!}{
\begin{tabular}{lcccc|cccc}
\toprule
  &
  \multicolumn{4}{c}{MOS \& Predicted Score After Attack} &
  \multicolumn{4}{c}{Score Before Attack   \& Score After Attack} \\ \cmidrule(lr){2-5} \cmidrule(lr){6-9}
  & {\makecell[c]{HyperIQA \\ base / +NT }}            & {\makecell[c]{DBCNN \\ base / +NT }}    & {\makecell[c]{LinearityIQA \\ base / +NT }}    & {\makecell[c]{MANIQA \\ base / +NT }}  & {\makecell[c]{HyperIQA \\ base / +NT }}            & {\makecell[c]{DBCNN \\ base / +NT }}    & {\makecell[c]{LinearityIQA \\ base / +NT }}    & {\makecell[c]{MANIQA \\ base / +NT }}  \Bstrut\\ 
  \hline
FGSM &
  25.729 $\big/$ \textbf{16.828}&
  36.758 $\big/$ \textbf{24.711} &
  50.823 $\big/$ \textbf{40.104} &
  \textbf{24.899} $\big/$ 25.712&
  19.174 $\big/$ \textbf{7.885} &
  32.778 $\big/$ \textbf{19.065} &
  48.128 $\big/$ \textbf{36.988} &
   15.549 $\big/$ \textbf{6.562}\\
Perceptual & 
13.565 $\big/$ \textbf{12.593}&
88.864 $\big/$ \textbf{51.961}    &
115.395 $\big/$ \textbf{80.949}   &
22.745 $\big/$ \textbf{21.998}   &
6.360 $\big/$ \textbf{0.130}&
63.991 $\big/$ \textbf{14.524}   &
115.732 $\big/$ \textbf{80.857}   &
\textbf{0.079} $\big/$ 0.189 \\

UAP$^{*}$ &
17.765 $\big/$ \textbf{16.363}&
19.775 $\big/$ \textbf{17.188}& 
16.997 $\big/$ \textbf{16.847}& 
\textbf{23.109} $\big/$ 27.832& 
10.583 $\big/$ \textbf{8.131}& 
14.833 $\big/$ \textbf{10.922}&  
20.813 $\big/$ \textbf{19.434}& 
5.795 $\big/$ \textbf{5.592}\\ 

Kor.$^{*}$ & 
18.564 $\big/$ \textbf{17.667}& 
\textbf{12.617} $\big/$ 12.707 & 
19.500 $\big/$ \textbf{17.865}& 
18.423 $\big/$ \textbf{17.395}& 
13.698 $\big/$ \textbf{10.107}& 
6.514 $\big/$ \textbf{5.298}& 
14.807 $\big/$ \textbf{12.407}& 
7.759 $\big/$ \textbf{6.680}\\ 
\bottomrule
\end{tabular}
}
\label{tab:RMSE}
\end{table*}

\begin{table*}[!t]
\caption{The \textbf{SROCC$\uparrow$} metric of NR-IQA models against attacks (``baseline $\big /$ baseline+NT''). \textbf{Bold} denotes better value in a cell.}
\centering
\renewcommand\arraystretch{1.3}
\resizebox{0.94\textwidth}{!}{
\begin{tabular}{lcccc|cccc}
\toprule
  &
  \multicolumn{4}{c}{MOS \& Predicted Score After Attack} &
  \multicolumn{4}{c}{Score Before Attack   \& Score After Attack} \\ \cmidrule(lr){2-5} \cmidrule(lr){6-9}
  & {\makecell[c]{HyperIQA \\ base / +NT }}            & {\makecell[c]{DBCNN \\ base / +NT }}    & {\makecell[c]{LinearityIQA \\ base / +NT }}    & {\makecell[c]{MANIQA \\ base / +NT }}  & {\makecell[c]{HyperIQA \\ base / +NT }}            & {\makecell[c]{DBCNN \\ base / +NT }}    & {\makecell[c]{LinearityIQA \\ base / +NT }}    & {\makecell[c]{MANIQA \\ base / +NT }}\Bstrut\\ 
  \hline
FGSM &
  0.021 $\big/$ \textbf{0.810}&
  -0.318 $\big/$ \textbf{0.200} &
  -0.375 $\big/$ \textbf{-0.347} &
  0.417 $\big/$ \textbf{0.772} &
  0.043 $\big/$ \textbf{0.941}&
  -0.333 $\big/$ \textbf{0.227} & 
  -0.429 $\big/$ \textbf{-0.426} &
  0.428 $\big/$ \textbf{0.878} \\
Perceptual &
  0.815 $\big/$ \textbf{0.858}& 
  -0.127 $\big/$ \textbf{0.643} & 
  0.477 $\big/$ \textbf{0.567} & 
  \textbf{0.876} $\big/$ 0.871 & 
  0.938 $\big/$ \textbf{1.000}& 
  -0.160 $\big/$ \textbf{0.773} & 
  0.542 $\big/$ \textbf{0.685} & 
  \textbf{1.000} $\big/$ \textbf{1.000} \\
UAP$^{*}$ &
  0.736 $\big/$ \textbf{0.822}& 
  0.705 $\big/$ \textbf{0.760} & 
  0.715 $\big/$ \textbf{0.739} &
  0.773 $\big/$ \textbf{0.839} & 
  0.825 $\big/$ \textbf{0.941}& 
  0.836 $\big/$ \textbf{0.887} & 
  0.836 $\big/$ \textbf{0.869} & 
  0.923 $\big/$ \textbf{0.976} \\ 
Kor.$^{*}$ &
  \textbf{0.808} $\big/$ 0.802& 
  \textbf{0.863} $\big/$ 0.856 &
  \textbf{0.775} $\big/$ \textbf{0.775} &
   0.828  $\big/$ \textbf{0.847}& 
  0.892 $\big/$ \textbf{0.922}& 
  0.978 $\big/$ \textbf{0.983}&
  \textbf{0.936} $\big/$ \textbf{0.936} &
   0.942 $\big/$ \textbf{0.969}\\ 
\bottomrule
\end{tabular}
}
\label{tab:SROCC}
\end{table*}

\begin{observation}
    The NT strategy significantly improves the robustness of NR-IQA models in most cases, where the robustness is in terms of RMSE, SROCC, PLCC, KROCC or $R$ robustness.
\end{observation}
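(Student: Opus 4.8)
The plan is to substantiate this observation empirically rather than deductively, since it is a statement about the behavior of trained models rather than a consequence of Theorem~\ref{thr:l1}. Concretely, I would tabulate all five robustness metrics — RMSE, SROCC, PLCC, KROCC, and $R$ robustness~\cite{2022_NIPS_Zhang_PAttack} — for each of the four baseline models (HyperIQA, DBCNN, LinearityIQA, MANIQA) under each of the four attacks (FGSM, Perceptual, UAP$^{*}$, Kor.$^{*}$), comparing ``baseline'' against ``baseline+NT'' under the two evaluation protocols used throughout the paper: predicted scores after attack versus ground-truth MOS, and predicted scores before versus after attack. The latter protocol is the one the NT loss in Eq.~\eqref{eq:training_loss} directly targets, so I would present it alongside the MOS-based protocol to show the effect is not an artifact of a single measurement choice. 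Tables~\ref{tab:RMSE} and~\ref{tab:SROCC} give the RMSE and SROCC grids here; the analogous PLCC, KROCC, and $R$ robustness grids go to the supplementary material.

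The core of the argument is then a count over this grid: in the large majority of (model, attack, metric, protocol) cells the baseline+NT entry is the better value. I would foreground the strongest cases to make the trend vivid — \eg\ DBCNN and LinearityIQA under the Perceptual Attack, where the before/after RMSE collapses by a large factor, and HyperIQA under FGSM, where SROCC jumps from essentially zero to near one — and I would stress that the gains span both white-box (FGSM, Perceptual) and black-box (UAP, Kor.) settings. This breadth is exactly what Theorem~\ref{thr:l1} predicts: the worst-case score change is controlled by $\epsilon\Vert\nabla_x f(x)\Vert_1$ regardless of which attack realizes it, so driving the gradient norm down (via the finite-difference surrogate) should improve robustness across attack families, and the tables should be read as the empirical counterpart of that bound.

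The main obstacle — and the reason the observation is hedged with ``in most cases'' — is the small set of cells where NT does not win, and the honest move is to confront these explicitly. The notable exceptions are MANIQA under FGSM and UAP$^{*}$ (whose large baseline RMSE against MOS already signals a weak clean-image fit, and whose transformer backbone interacts less favorably with the regularizer) and DBCNN under the Kor.\ attack (where the baseline is already quite robust, so the NT value is essentially tied rather than worse). I would argue these are minor: the degradations are small in absolute terms, concentrated in the MOS-based protocol rather than the before/after protocol that NT actually optimizes, and dominated by the substantial improvements elsewhere. A secondary, more conceptual difficulty is that this observation cannot be ``proved'' in the sense Theorem~\ref{thr:l1} can — so the appropriate framing is a statistical trend over a deliberately broad evaluation grid, with robustness of the conclusion to the choice of dataset, attack hyperparameters, and the weights $\lambda$ and step size $h$ deferred to the ablation studies in Sec.~\ref{sec:ex_ablation} and the supplementary material.
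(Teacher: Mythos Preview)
Your proposal is correct and matches the paper's own approach: the observation is substantiated empirically by tabulating the five robustness metrics across the four models, four attacks, and two evaluation protocols (Tables~\ref{tab:RMSE} and~\ref{tab:SROCC} here, PLCC/KROCC/$R$ in the supplement), highlighting representative strong cells, and explicitly acknowledging the handful of exceptions that motivate the ``in most cases'' hedge. The only minor divergence is in which exceptions you single out --- the paper's discussion emphasizes MANIQA under the Perceptual Attack in the before/after protocol (attributing it to test-set bias and the near-trivial effect of that attack on MANIQA, with SROCC~$=1$), whereas you focus on the MOS-protocol cells for MANIQA and DBCNN; both readings are consistent with the tables, though your speculation about the transformer backbone is not something the paper itself claims.
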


Due to space constraints, we only present the robustness results for RMSE (Table~\ref{tab:RMSE}) and SROCC (Table ~\ref{tab:SROCC}) in this subsection. 
Comprehensive results for PLCC, KROCC and $R$ robustness can be found in the supplementary material. 
In both tables, columns 2-5 display the IQA metric calculated between MOS values of unattacked images and predicted scores on adversarial examples, while columns 6-9 showcase the metric calculated between predicted scores on unattacked images and scores on adversarial examples.

As shown in Table~\ref{tab:RMSE}, when RMSE is computed between predicted scores before and after attacks, NR-IQA models trained with the NT strategy exhibit smaller score changes under nearly all attack scenarios compared to baseline models.
These results confirm the correctness of our theoretical analysis in Sec.~\ref{sec:method_why}.
The robustness improvement is especially significant when models are attacked by FGSM.
The only exception is MANIQA when attacked by the Perceptual Attack.
In this case, the RMSE of MANIQA is smaller than that of the NT-trained model, where the difference is only 0.11.
We think this phenomenon can be attributed to inherent biases among test images.
Furthermore, MANIQA and its NT-enhanced version exhibit significant robustness against the Perceptual Attack, as indicated by an SROCC value of 1 between predicted scores before and after the attack.
This signifies that the Perceptual Attack has minimal impact on MANIQA and MANIQA+NT, resulting in a reasonably small difference in RMSE between the two models.

When considering RMSE results measured between MOS values and predicted scores after attacks, the robustness of NT-trained models is also improved.
For example, the RMSE value of DBCNN under the FGSM attack is about 36.758, whereas that of DBCNN+NT is just 24.711.
There are only 3 out of 16 cases where baseline+NT models perform worse than baseline models.
Such occurrences are expected because the NT strategy does not leverage MOS information but relies on the original predicted scores.

Results shown in Table~\ref{tab:SROCC} demonstrate that the NT strategy can also enhance the robustness in terms of SROCC, although we are not clear about the theoretical connection between the NT strategy and SROCC.
The improvement is particularly pronounced in white-box scenarios.
Taking the HyperIQA model as an example, the robustness of the baseline model measured by SROCC is notably deficient under the FGSM attack.
The SROCC value between MOS values and predicted scores is a mere 0.021, while the SROCC value between predicted scores before and after the FGSM attack is only 0.043.
However, with the inclusion of the NT strategy, there is a significant enhancement in SROCC. 
The SROCC value between MOS values and predicted scores increases to 0.810, while the SROCC between scores before and after the FGSM attack rises to 0.941. 
This exemplifies the effectiveness of the NT strategy in boosting the SROCC robustness of NR-IQA models.

\begin{observation}
    In IQA tasks, the robustness in terms of distinct metrics is not completely the same. %
\end{observation}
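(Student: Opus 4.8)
The plan is to establish Observation~3 empirically rather than formally: it is a claim about how our trained models behave on the LIVEC benchmark, not a universal mathematical identity, so "proving" it means (i) giving a conceptual reason why the five robustness metrics (RMSE, PLCC, SROCC, KROCC, and $R$ robustness) need not induce the same ordering, and (ii) exhibiting concrete model/attack pairs in our experiments where the ordering actually flips between metrics.

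For the conceptual part I would observe that RMSE and PLCC quantify \emph{accuracy} — the absolute deviation of post-attack scores from a fixed reference — whereas SROCC and KROCC quantify \emph{monotonicity} and are invariant under any order-preserving reparameterization of the score axis, while $R$ robustness measures a relative, per-image change. Hence an attack or a defense that uniformly shifts or rescales a model's output band degrades RMSE while leaving SROCC and KROCC essentially untouched; conversely, a perturbation that reshuffles a few near-tied images can collapse the rank correlations while barely moving RMSE. This already shows the metrics capture genuinely different facets of robustness, so full agreement would be a coincidence, not a necessity.

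For the empirical part I would scan Tables~\ref{tab:RMSE} and~\ref{tab:SROCC}, together with the PLCC, KROCC, and $R$ tables in the supplement, for cells where ``base'' beats ``+NT'' under one metric but ``+NT'' beats ``base'' under another. A clean witness is MANIQA under FGSM in the ``MOS \& predicted score after attack'' block: the baseline attains the smaller RMSE ($24.899$ vs.\ $25.712$) yet the NT variant attains the larger SROCC ($0.772$ vs.\ $0.417$), so the question ``which model is more robust here?'' has opposite answers. MANIQA under the Perceptual Attack gives a second witness in the \emph{opposite} direction (RMSE favours ``+NT'', SROCC favours ``base''), which further underscores that the two families are not aligned; and I would additionally look in the supplement for a pair in which two accuracy metrics, or two monotonicity metrics, disagree, which is the stronger form of the phenomenon. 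Listing a handful of such disagreements across attacks and baselines discharges the observation.

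The main obstacle is not locating disagreements — they are plentiful — but arguing that they are \emph{genuine} rather than artifacts of test-set bias or the stochasticity of attack optimization. I would handle this by (i) restricting attention to flips whose margins exceed the clean-image spread reported in Table~\ref{tab:clean_image}, (ii) checking that a given metric-pair disagreement recurs across several baselines or attacks rather than in isolation, and (iii) tying each recurring flip back to the shift/rescale-versus-reordering distinction above, so the pattern has a mechanistic explanation and is not mere noise. A residual caveat I would state explicitly is that, unlike Theorem~\ref{thr:l1}, this observation is inductive over the tested settings and does not assert that the metrics can never agree — only that, in IQA, they are not completely the same.
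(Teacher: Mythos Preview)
Your proposal is correct, but your choice of witnesses differs from the paper's. The paper justifies Observation~3 by comparing \emph{different models} under the \emph{same} attack: for the UAP attack in the left half of Tables~\ref{tab:RMSE} and~\ref{tab:SROCC}, HyperIQA beats MANIQA on RMSE ($17.765$ vs.\ $23.109$) yet loses on SROCC ($0.736$ vs.\ $0.773$); similarly LinearityIQA+NT beats DBCNN+NT on RMSE but not on SROCC. You instead compare the \emph{same} model with and without NT (MANIQA base vs.\ +NT under FGSM and Perceptual), exhibiting cells where the bold flips between the two tables.

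Both routes discharge the observation, but they emphasize slightly different things. The paper's cross-model comparison speaks directly to the practical question ``which NR-IQA model is the most robust?'' and shows that the answer depends on the metric --- which is arguably the more natural reading of the observation as stated. Your base-vs.-NT comparison instead shows that the \emph{benefit of the defense} is metric-dependent, which is closer in spirit to an ablation on NT than to a statement about robustness metrics per se; note also that your second witness (MANIQA under Perceptual) has margins small enough that your own noise filter in step~(i) would likely exclude it. On the other hand, you add two things the paper does not: the accuracy-vs.-monotonicity conceptual explanation for why the metrics can diverge, and an explicit sanity check that the flips are not test-set noise. The paper's treatment is a two-sentence pointer to table entries; yours is a more careful empirical argument.
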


In Table~\ref{tab:RMSE} and Table~\ref{tab:SROCC}, it is evident that a model showing robustness in terms of RMSE when subjected to an attack method may not necessarily exhibit robustness in SROCC.
Take the left part of the two tables as an example, for baseline models, we can see that HyperIQA achieves much better robustness in terms of RMSE than MANIQA against the UAP attack (17.765 vs. 23.109), but it performs worse in SROCC (0.736 vs. 0.773).
Similar phenomena also occur with NT-trained models where LinearityIQA+NT shows better RMSE robustness but worse SROCC robustness than DBCNN+NT against the UAP attack.
How to make a trade off between the adversarial robustness from different perspectives brings challenges in IQA tasks, potentially opening up new avenues for further exploration and research.

\begin{observation}
    The NT strategy exhibits a more effective defense against white-box attacks compared to black-box attacks.
\end{observation}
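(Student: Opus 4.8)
The plan is to support this observation empirically, since it is a regularity extracted from the experimental tables rather than a formal statement, and then to give a mechanistic explanation for why the regularity is expected. First I would tabulate, for every baseline model and every attack, the robustness gain that NT provides — most naturally the reduction in RMSE computed between predicted scores before and after the attack, since Theorem~\ref{thr:l1} ties this quantity directly to $\Vert \nabla_x f(x) \Vert_1$, which is precisely what the NT loss in Eq.~\eqref{eq:training_loss} shrinks. Reading the right-hand half of Table~\ref{tab:RMSE} and Table~\ref{tab:SROCC}, the two white-box columns (FGSM and Perceptual) show large gains — e.g., HyperIQA's before/after RMSE drops from $19.174$ to $7.885$ under FGSM and from $6.360$ to $0.130$ under the Perceptual Attack, and DBCNN's drops from $32.778$ to $19.065$ and from $63.991$ to $14.524$ — whereas the two black-box columns (UAP and Kor.) show comparatively modest gains. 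To make the comparison precise I would average the relative improvement over the four baseline models within each group and show that the white-box average exceeds the black-box average.

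Next I would explain why this gap is expected from the mechanism of the defense. White-box attacks such as FGSM and the Perceptual Attack construct the perturbation $\delta$ directly from $\nabla_x f(x)$ of the target model $f$, so by Theorem~\ref{thr:l1} the worst-case score change they can induce is governed exactly by the quantity NT regularizes; the defense acts on the same gradient the attacker exploits, giving a near one-to-one reduction. Black-box transfer attacks instead build $\delta$ from a surrogate model's gradient — UAP from PaQ-2-PiQ, Kor. from ResNet50 — and their effectiveness on $f$ depends on how well that surrogate direction aligns with $\nabla_x f(x)$; shrinking $\Vert \nabla_x f(x) \Vert_1$ therefore constrains their impact only indirectly, so the measured improvement is smaller. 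This is exactly the ordering the observation asserts.

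Finally I would state the scope qualifier carefully. The claim is a consistent trend across the bulk of the $4\times 4$ grid of (model, attack) pairs, not a universal inequality: in cells where the baseline is already highly robust there is little room to improve, e.g., MANIQA under the Perceptual Attack, where both the baseline and the NT model are near-perfect (SROCC $=1$ between scores before and after the attack), so the small ``gain'' there reflects saturation rather than a failure of the white-box/black-box pattern. I would present the averaged per-group improvements together with this caveat.

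The main obstacle is that, being empirical, the argument cannot be airtight: the magnitude of a transfer attack's effect on $f$ is data- and surrogate-dependent, and several of the black-box cells (e.g., some Kor. entries for SROCC) start from an already strong baseline, so a small ``improvement'' there is not evidence about white-box versus black-box behavior at all. I would therefore frame the conclusion as a robust trend supported by both the averaged improvements and the mechanistic explanation above, rather than attempt to prove a strict inequality that holds cell by cell.
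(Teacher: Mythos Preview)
Your empirical methodology matches the paper's: read off the per-attack RMSE/SROCC gains from Tables~\ref{tab:RMSE} and~\ref{tab:SROCC}, then average within the white-box group and within the black-box group to show the white-box average is larger (the paper points to the same tables and to the averaged $\Delta$RMSE/$\Delta$SROCC in the supplementary Table~S7).

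Where you diverge is in the explanation. You attribute the gap to a \emph{gradient-alignment} mechanism: white-box attacks use $\nabla_x f(x)$ itself, which is exactly the quantity NT shrinks, whereas transfer attacks use a surrogate gradient that is only partially aligned with $\nabla_x f(x)$, so shrinking $\Vert\nabla_x f(x)\Vert_1$ helps only indirectly. The paper instead gives a \emph{headroom} explanation: existing black-box attacks are simply weaker, so the baseline models are already fairly robust against them (e.g., baseline SROCC before/after exceeds $0.8$ for all black-box cells but is below $0.6$ for most white-box cells), leaving less room for NT to improve. You mention this saturation effect only as an edge-case caveat (MANIQA under Perceptual Attack), but the paper treats it as the primary reason.

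Both accounts are compatible and neither is formally proved. Your alignment argument is more tightly connected to Theorem~\ref{thr:l1} and gives a cleaner causal story; the paper's headroom argument is simpler and is directly read off the baseline numbers without needing any assumption about surrogate--target gradient alignment. A complete discussion would present both, since the headroom effect is visibly present in the data and your mechanism alone does not explain why, say, the Kor.\ attack barely moves the baseline in the first place.
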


Table~\ref{tab:RMSE} and Table~\ref{tab:SROCC} demonstrate that the improvement in RMSE / SROCC from the baseline to baseline+NT models is generally greater under white-box attacks than under black-box attacks.
This trend is more clear from Table S7 in the supplementary material by comparing the averaged metrics of improvement.

This happens because the attack capability of existing black-box attacks is generally weaker than that of white-box attacks on NR-IQA models. Hence, the baseline models exhibit better robustness against black-box attacks compared to white-box attacks, making the robustness improvement brought by NT less evident in the black-box scenario. For instance, when attacked by black-box methods, the SROCC values between predicted scores before and after attacks for all baseline models exceed 0.8, while these values for most baseline models under white-box attacks are below 0.6.
This observation highlights the importance of exploring effective black-box attacks on IQA models.

\subsection{Norm Reduction}
\label{sec:ex_norm}

To validate the effectiveness of the NT strategy in reducing the norm of the gradient, as well as the accuracy of Eq.~\eqref{eq:approx_l1} in approximating the $\ell_1$ norm, we generate distribution plots of $\Vert \nabla_x f(x) \Vert_1$.
Here, $x$ represents samples from the test set.

Figure~\ref{fig:norm_dis} compares the norm distribution between baseline and baseline+NT models.
We can see that the gradient norms of the baseline+NT models are all shifted towards the left compared to the baseline models.
This indicates that models trained with the NT strategy exhibit a smaller $\ell_1$ gradient norm concerning the input image compared to the baseline models.
These results confirm that Eq.~\eqref{eq:approx_l1} serves as a reliable approximation of the gradient $\ell_1$ norm.

\begin{figure}[!t]
    \centering
    \includegraphics[width=\linewidth]{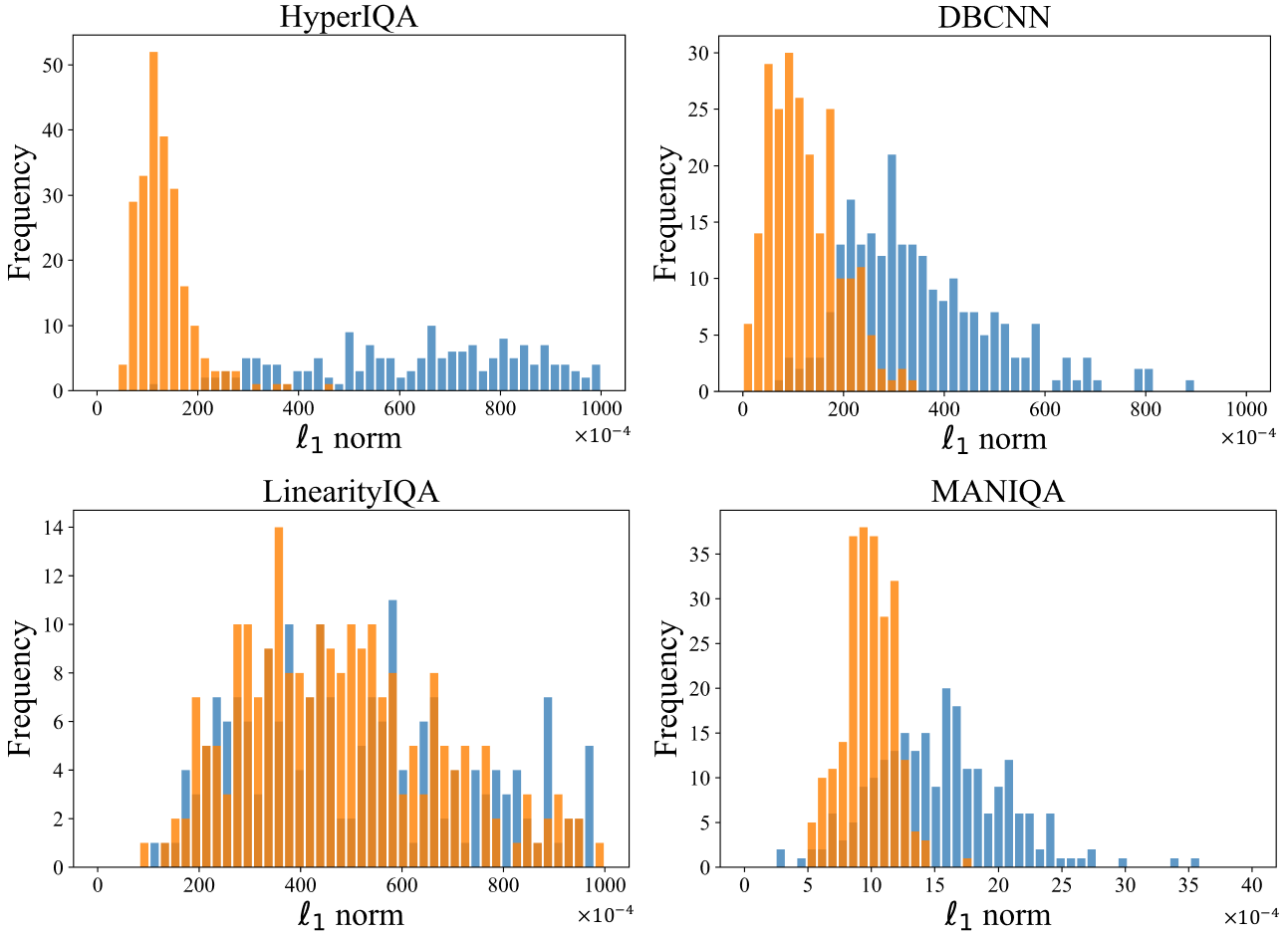}
    \caption{The comparison of $\ell_1$ norm distribution of gradient between baseline models (blue) and baseline+NT models (orange).}
    \label{fig:norm_dis}
\end{figure}

\begin{figure}[!thb]  
    \centering
    \subfloat{
    \includegraphics[width=0.475\linewidth]{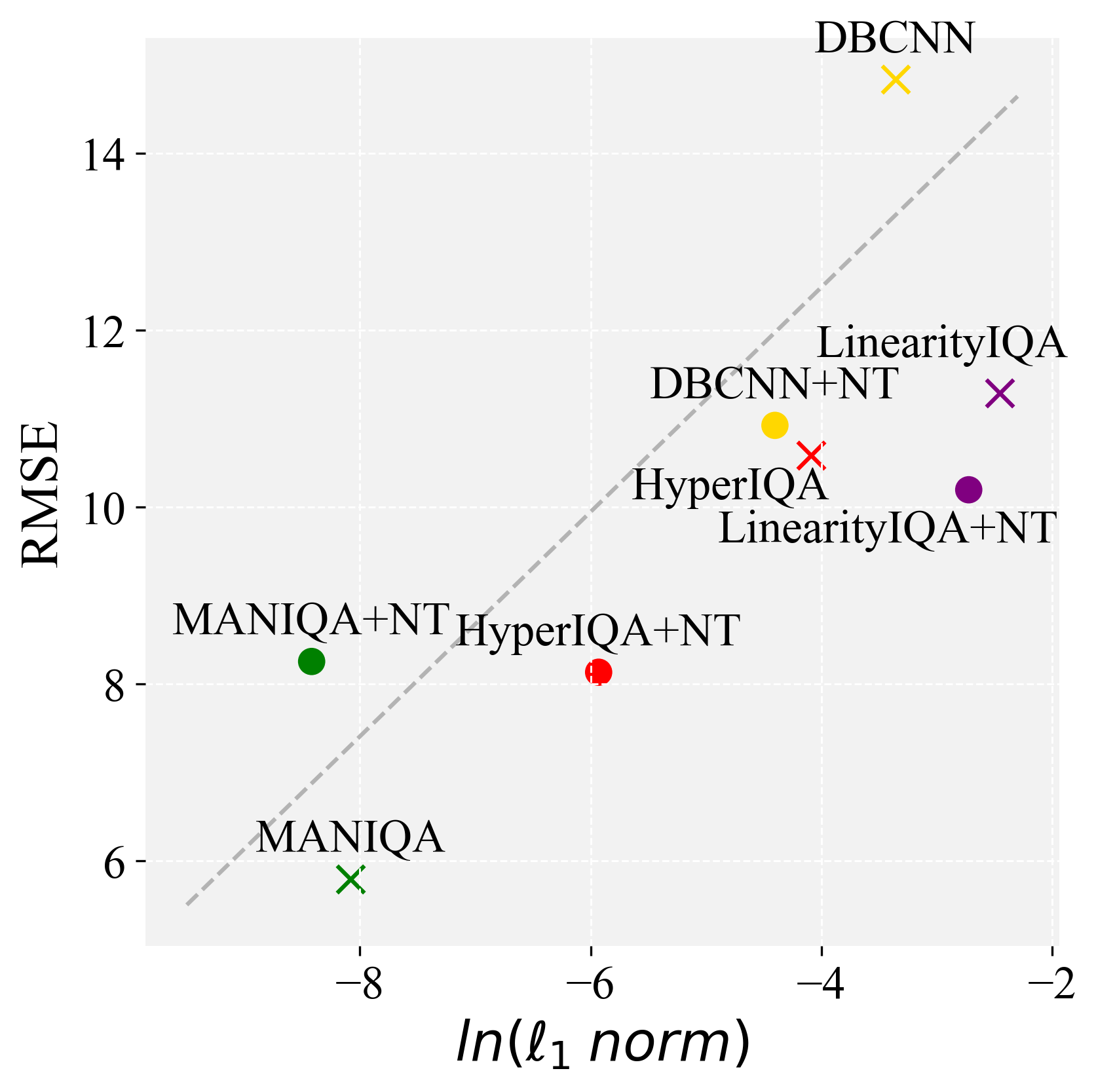}}
    \subfloat{
    \includegraphics[width=0.495\linewidth]{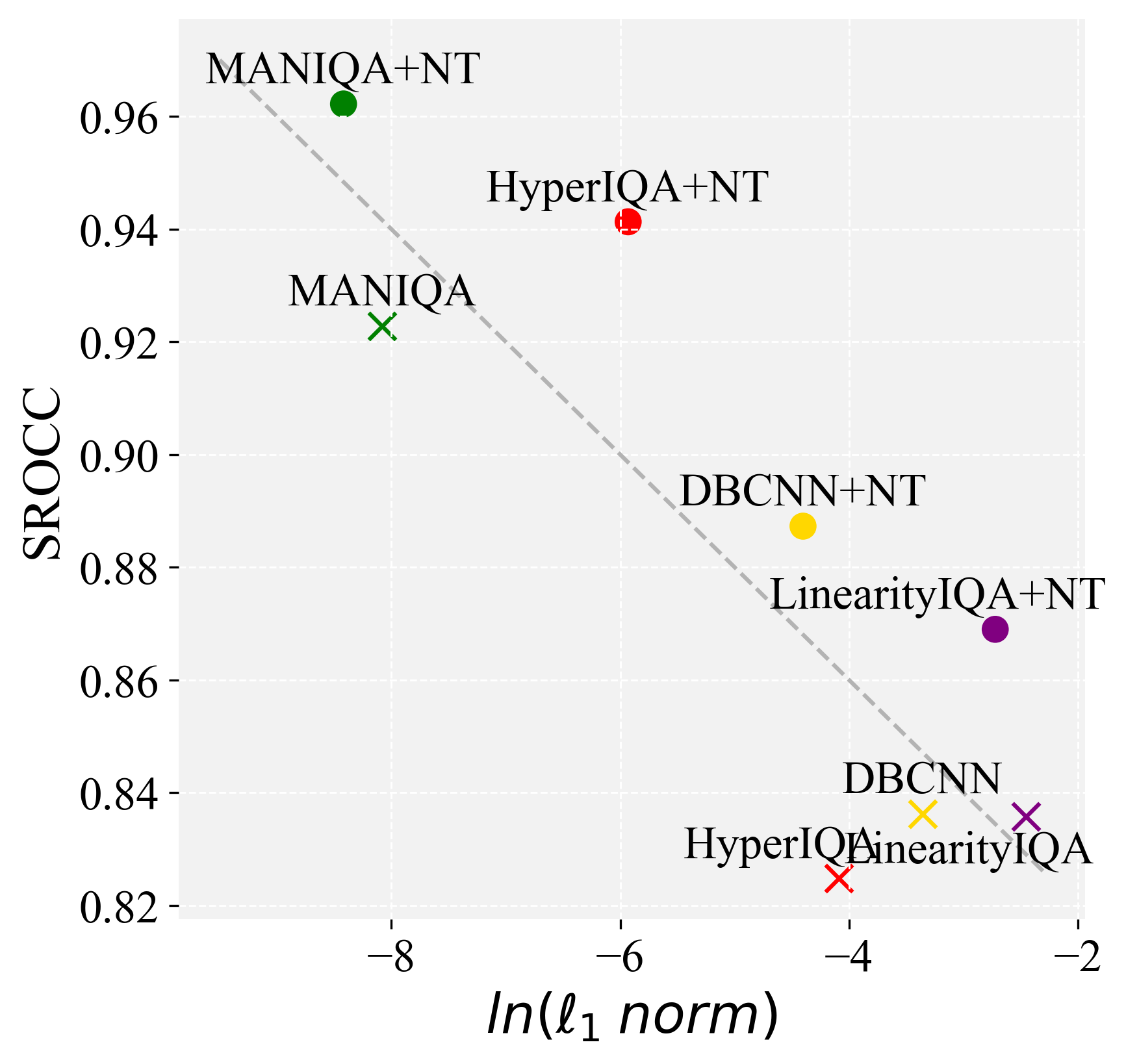}}
    \caption{The relationship between the gradient norm and the robustness in terms of RMSE (left) and SROCC (right). The horizontal axis represents the logarithm of the average $\Vert \nabla_x f(x)\Vert_1$ value across all test images. All metrics are calculated between predicted scores before and after the UAP attack.}
    \label{fig:norm_robustness}
\end{figure}

To further demonstrate that a smaller $\Vert \nabla_x f\Vert_1$ enhances the robustness of an NR-IQA model against adversarial attacks, we draw a scatter plot to show the relationship between the adversarial robustness and the gradient norm.
In Figure~\ref{fig:norm_robustness}, the horizontal axis represents the logarithm of the average $\Vert \nabla_x f(x)\Vert_1$ value across all test images.
Points in the left part of Figure~\ref{fig:norm_robustness} generally follow a diagonal distribution from bottom left to top right, indicating that models with smaller gradient norms tend to exhibit better robustness in terms of RMSE.
Moreover, points in the right part of Figure~\ref{fig:norm_robustness} are generally distributed from the top left to the bottom right.
This reflects that models with smaller gradient norms tend to exhibit better robustness in terms of SROCC.

\subsection{Attack Intensity and Robustness}
\label{sec:ex_intensity}
To evaluate the robustness of the baseline model and the baseline+NT model under different attack intensities, we adjust the strength of the iterative FGSM attack (illustrated in the supplementary material) with different iterations and the $\ell_\infty$ norm $\epsilon$ of perturbations. 
Generally, a higher number of iterations and larger $\epsilon$ values correspond to more potent attacks.
The attack intensity is quantified using SSIM, with smaller SSIM values signifying greater attack intensity.

Figure~\ref{intensity} presents the performance of HyperIQA and its NT-trained versions under attacks with varying intensities. 
As the attack intensities increase, the RMSE and SROCC values for HyperIQA and its NT version tend to get worse in general. This reflects that stronger attacks lead to decreased performance for both normally-trained and NT-trained models in most cases.
Meanwhile, HyperIQA+NT model consistently keeps lower RMSE values than HyperIQA at the same attack intensity, regardless of the intensity levels.
This demonstrates the effectiveness of the NT strategy against attacks with varying intensities. 
\begin{figure}[!t]
    \centering
    \subfloat{
    \includegraphics[width=0.495\linewidth]{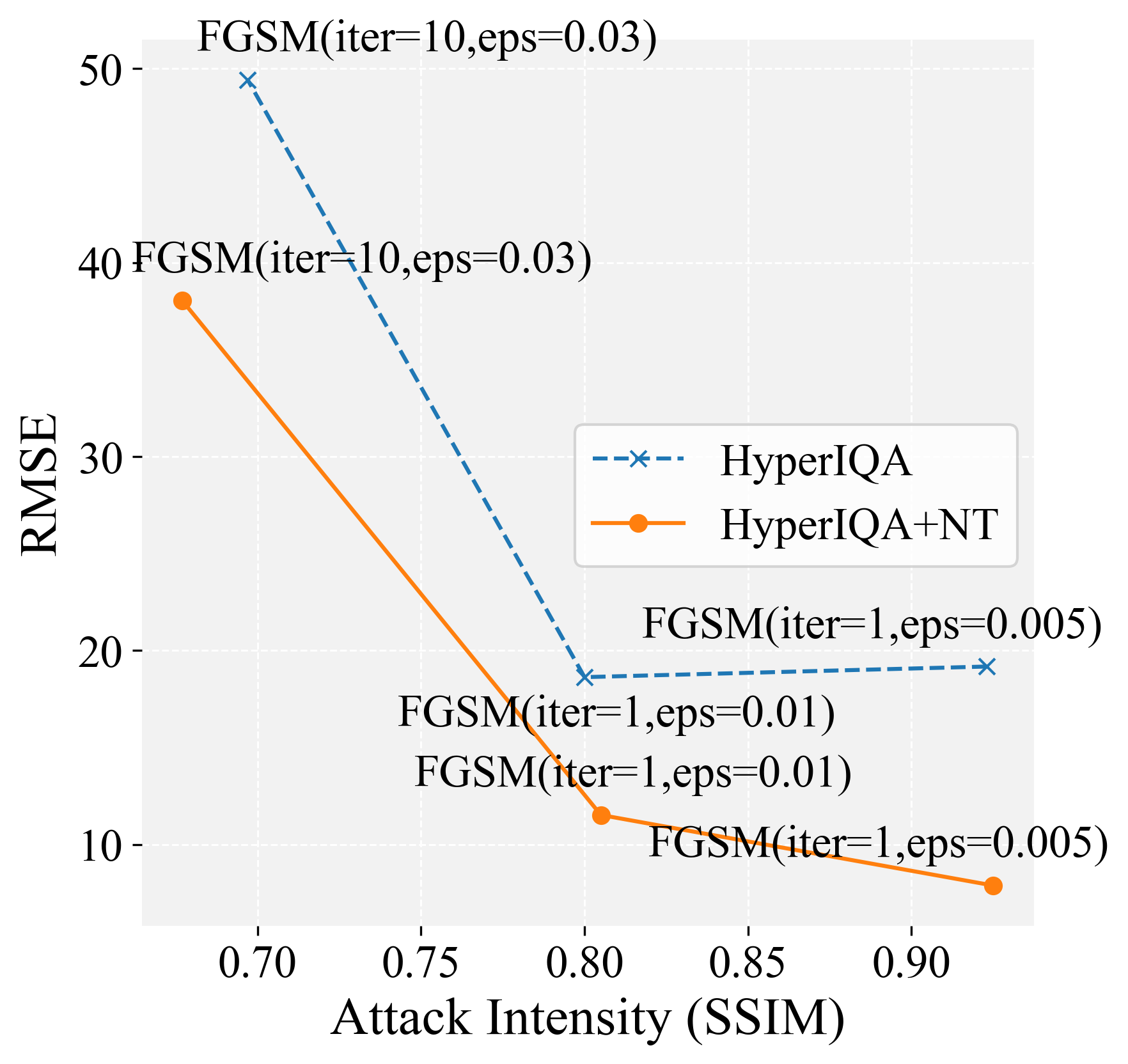}}
    \subfloat{
    \includegraphics[width=0.495\linewidth]{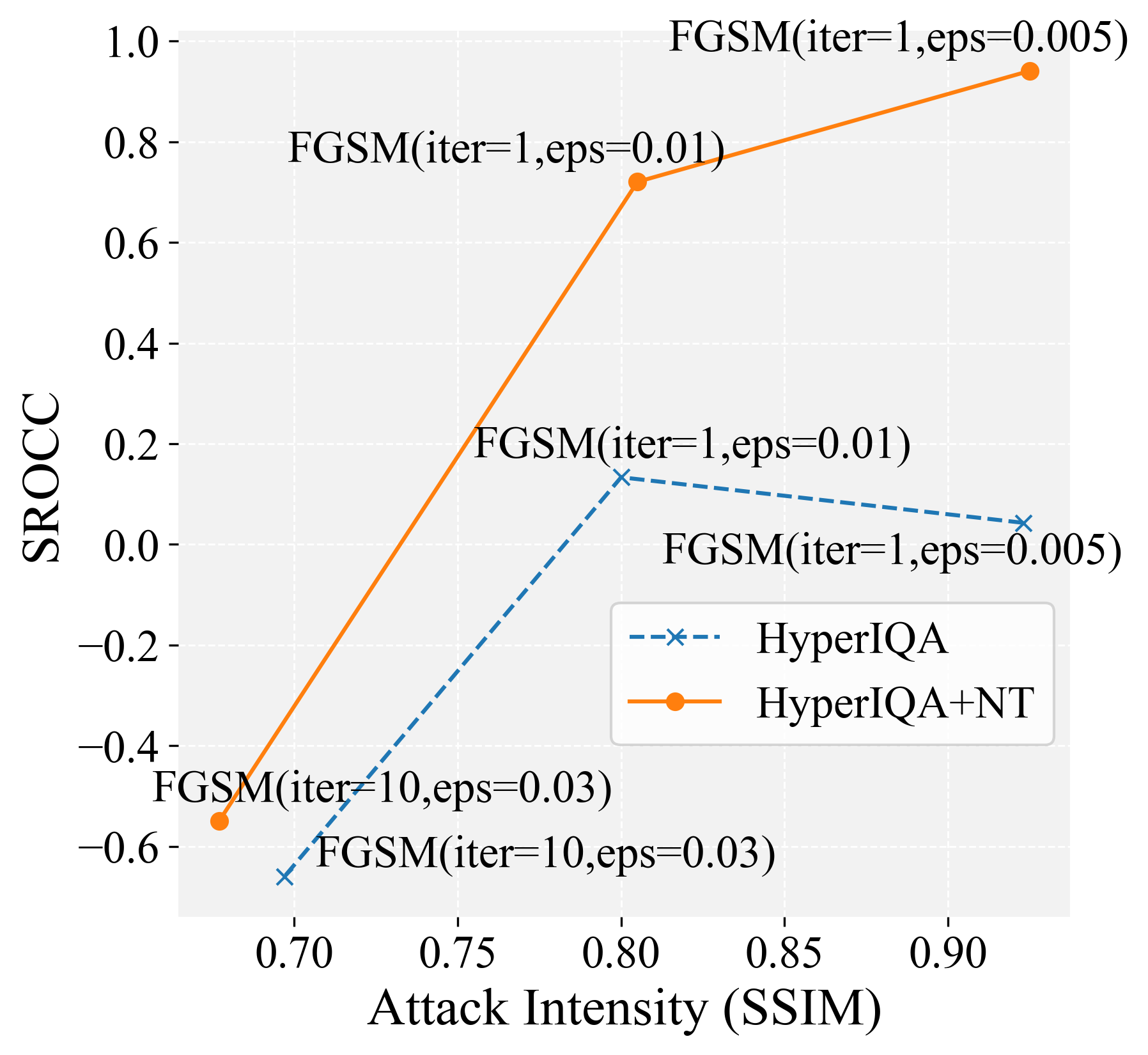}}
    \caption{RMSE (left) and SROCC (right) values of HyperIQA and HyperIQA+NT models under different attack intensities. RMSE and SROCC are calculated between predicted scores before and after the FGSM attack.} 
    \label{intensity}
\end{figure}

\subsection{Ablation Study}\label{sec:ex_ablation}
We conduct additional experiments to test the impact of hyperparameters in Eq.~\eqref{eq:training_loss} for the NT strategy: the weight $\lambda$ of the gradient norm and the step size $h$ in the finite difference.
Due to the space limit, we present partial results and full results are shown in the supplementary material.

In Figure~\ref{fig:lambda}, we fix $h=0.01$ and vary $\lambda$ in Eq.~\eqref{eq:training_loss} in the range from $0$ to $0.003$.
Our analysis focuses on two aspects of an NR-IQA model: its performance on unattacked images and its robustness against attacks. For the former, we utilize SROCC on unattacked images across MOS values and predicted scores, and for the latter, we employ the RMSE between predicted scores before and after the FGSM attack. 
As $\lambda$ increases, SROCC values on unattacked images tend to decrease on all baseline+NT models,  while the RMSE values under
the FGSM attack tends to decrease consistently. 
This implies that increasing $\lambda$ enhances the robustness of NR-IQA models but leads to a performance decline on unattacked images.

\begin{figure}[!th]
    \centering
    \includegraphics[width=0.9\linewidth]{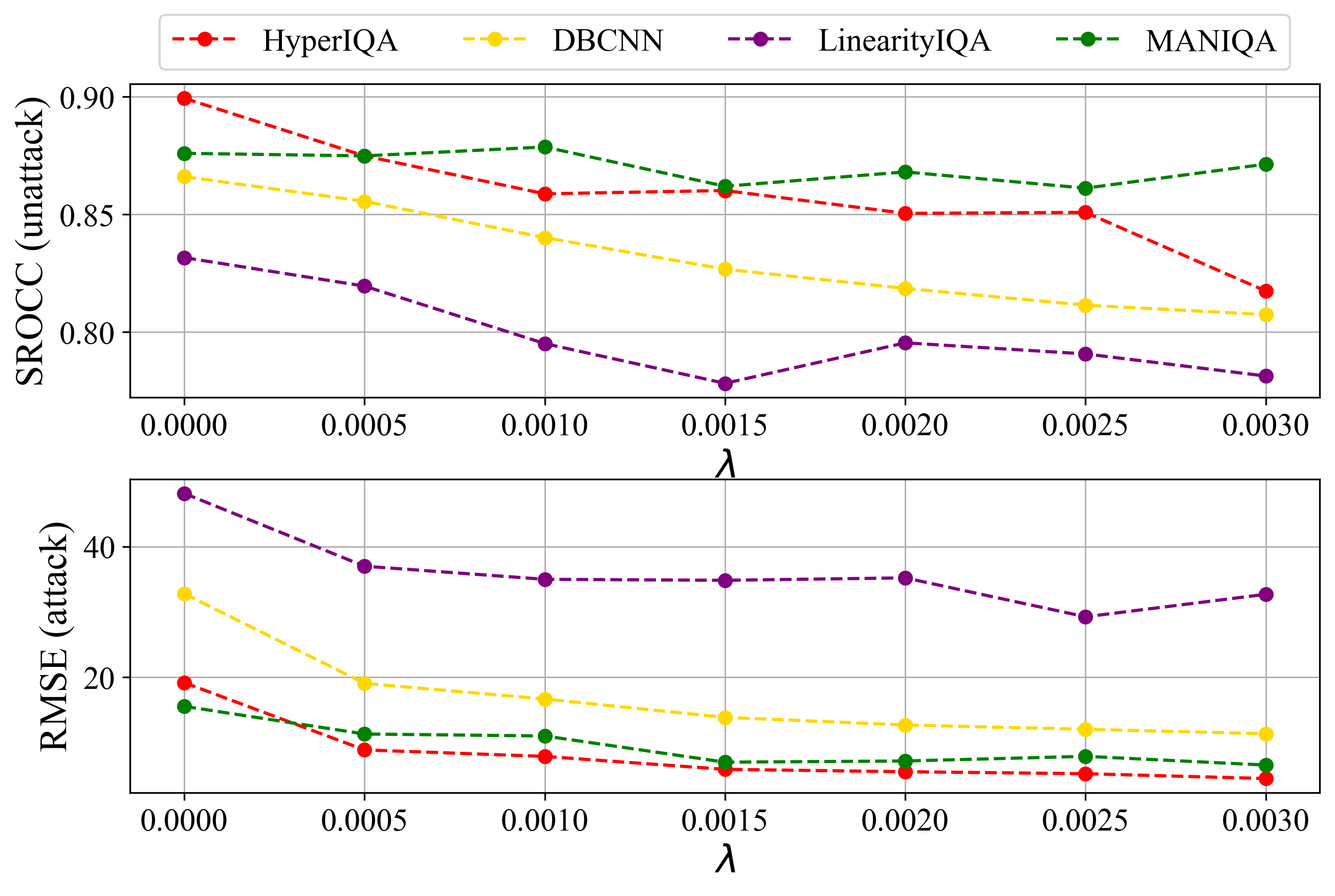} %
    \caption{The impact of $\lambda$ to SROCC on unattacked images and RMSE on FGSM attacked images.}
    \label{fig:lambda}
\end{figure}

To explore the effect of the step size $h$ in Eq.~\eqref{eq:training_loss} on the performance of IQA models, we fix $\lambda=0.0005$ and vary $h$ in $\{0.001, 0.01, 0.1, 1\}$ for DBCNN.
In Table~\ref{tab:h}, we present the SROCC and RMSE values across MOS values and predicted scores for unattacked images, and SROCC and RMSE values between predicted scores before and after the FGSM attack for adversarial examples.
In theory, a large $h$ cannot sufficiently represent the neighborhood of $x$, so the approximation of the $\ell_1$ norm is inaccurate.
The experimental results also confirm this point where the robustness of the model is worse when $h=0.1$ and $1$.
Conversely, an exceedingly small $h$, such as $h=0.001$, achieves effective defense performance but leads to a significant performance decline on unattacked images.

\begin{table}[!t]
\centering
\caption{The comparison of different $h$ of DBCNN+NT model with unattacked images and FGSM attacked images.}
\label{tab:h}
\resizebox{0.9\linewidth}{!}{
\begin{tabular}{cccccc}
\toprule
    &  & \multicolumn{4}{c}{$h$} \\ \cmidrule(lr){3-6} 
    &  & 
    {\makecell[c]{0.001 }} & {\makecell[c]{0.01 }} & {\makecell[c]{0.1 }} & {\makecell[c]{ 1 }}\\ \midrule
\multirow{2}{*}{Unattacked} & SROCC$\uparrow$  & 0.788  & 0.856  & 0.846   & 0.844   \\
                            & RMSE$\downarrow$  & 16.099 & 14.138 & 12.417  & 14.809  \\ \midrule
\multirow{2}{*}{Attacked}   & SROCC$\uparrow$ & 0.577  & 0.200  & -0.3832 & -0.4406 \\
                            & RMSE$\downarrow$  & 7.356  & 19.065 & 28.785  & 18.767  \\ \bottomrule
\end{tabular} %
}
\end{table}

\vspace{-0.1in}
\section{Conclusion}

To the best of our knowledge, this is the first work designing IQA-specific defense methods against adversarial attacks.
Our work offers a rigorous theoretical proof that the score changes of NR-IQA models are related to the $\ell_1$ norm of the gradient when the perturbation is small.
Furthermore, models trained with the proposed NT strategy exhibit significant improvement in adversarial robustness against both white-box and black-box attacks.

{\noindent \textbf{Limitations and future work.}}
In this study, our primary theoretical analysis is on enhancing the prediction accuracy of NR-IQA models, \ie, reducing the changes in predicted scores when NR-IQA models are exposed to attacks. Nevertheless, an interesting and valuable avenue for future research is the development of NR-IQA models that demonstrate robustness in terms of prediction monotonicity  like SROCC. Besides, we intend to explore our method applied to FR-IQA models in future work.

{\noindent \textbf{Acknowledgements.}} This work is partially supported by Sino-German Center (M 0187) and the NSFC under contract 62088102. 
Thank Ruohua Shi for supports during rebuttal.
Thank Zhaofei Yu and Yajing Zheng for their valuable suggestions on the writing. 
Thank High-Performance Computing Platform of Peking University for providing computational resources. 

{\small
\bibliographystyle{ieeenat_fullname}
\bibliography{main}
}

\newpage

\renewcommand{\thetable}{S\arabic{table}}
\renewcommand{\thefigure}{S\arabic{figure}}
\renewcommand{\thesection}{S\arabic{section}}
\renewcommand{\theequation}{S\arabic{equation}}

\section*{Supplementary Material}

In the supplementary material, we offer the formulation of NR-IQA metrics (Sec.~\red{3.1}), detailed proofs of the finite difference in Eq.~(6) (Sec.~\red{4.2}), additional implementation details (Sec.~\red{5.1}), further robustness analysis (Sec.~\red{5.2}), and supplementary ablation study results (Sec.~\red{5.5}). Additionally, we present more visualization results.

\section{Formulations of RMSE, SROCC, KROCC, PLCC and $R$ Robustness}
In this section, we will introduce IQA-specific metrics RMSE, SROCC, KROCC, PLCC, and $R$ robustness mentioned in Sec~\textcolor{red}{3.2}.

\textbf{RMSE} measures the difference between MOS values and predicted scores, which is represented as
\begin{equation}
    \text{RMSE} = \sqrt{\frac{1}{N}
    \sum_{i=1}^N (y_i-f_i)^2}.
\end{equation}
In this equation, $N$ is the number of images. $y_i$ and $f_i$ represent the MOS and predicted score of the $i^{th}$ image, respectively. The smaller the RMSE value is, the smaller the differences between the two groups of scores.

\textbf{SROCC} measures the correlation between MOS values and predicted scores to what extent the correlation can be described by a monotone function. The specific formulation is as follows:
\begin{equation}
    \text{SROCC} = 1 - \frac{6 \sum_{i=1}^N d_i^2}{ N(N^2-1)},
\end{equation}
where $d_i$ denotes the difference between orders of the $i^{th}$ image in subjective and objective quality scores. The closer the SROCC value is to 1, the more consistent the ordering is between two groups of scores.

\textbf{KROCC} measures the degree of concordance in the ranking of MOS values and predicted scores. The formulation is:
\begin{equation}
    \text{KROCC} = \frac{2 (N_{\text{con}} - N_{\text{dis}})}{ N(N-1)}.
\end{equation}
In this equation, $N_{\text{con}}$ and $N_{\text{dis}}$ represent the number of image pairs in the test dataset with consistent and inconsistent ranking of subjective and objective quality scores, respectively. The closer the KROCC value is to 1, the more consistent the ordering is between two groups of scores.   

\textbf{PLCC} measures the linear correlation between MOS values and predicted scores, which is formulated as
\begin{equation}
\begin{split}
    \text{PLCC} &= \frac{
    \sum_{i=1}^N (y_i-\Bar{y})(f_i - \Bar{f})
    }{
    \sum_{i=1}^N (y_i-\Bar{y})^2(f_i - \Bar{f})^2
    }, \\
    \Bar{y} & = \frac{1}{N} \sum_{i=1}^N y_i,  \Bar{f} = \frac{1}{N} \sum_{i=1}^N f_i.
\end{split}
\end{equation}
The closer the PLCC value is to 1, the higher the positive correlation between the two groups of scores.

\textbf{$R$ robustness} was recently proposed by~\citet{2022_NIPS_Zhang_PAttack}.
It takes the maximum allowable change in quality prediction into consideration:
\begin{equation}
    R=\frac{1}{N}\sum_{i=1}^{N}\log\left(\frac{\max\{\beta_1-f(x_i),f(x_i)-\beta_2\}}{|f(x_i)-f(x'_i)|}\right),
\end{equation}
where $N$ is the number of images, $x_i$ is the $i_\text{th}$ image to be attack, $x'_i$ is the attacked version of $x_i$. $f(\cdot)$ is the IQA model for quality prediction. $\beta_1$ and $\beta_2$ are the maximum MOS and minimum MOS among all MOS values. A larger $R$ value means better robustness.

\section{Proof of Eq. (6)}

Eq. (6) in the main context illustrates how to approximate the $\ell_1$ norm of $\nabla_x f(x)$ by the finite difference, which is expressed as
\begin{equation}
    \Vert \nabla_x f(x) \Vert_1 \approx 
    \left| \frac{
        f(x+h\cdot d) - f(x)
        }{
        h
        } \right|.
\nonumber
\end{equation}
In this formula, $h$ is a small step size and $d=\text{sign}(\nabla_x f(x))$.
We provide a proof of this approximation in this section.

\begin{proof}
     We expand $f(x+h\cdot d)$ at point $x$ by the first order Taylor estimation, \ie,
    \begin{equation}
        f(x+h\cdot d) \approx f(x) + h\cdot \nabla_x f(x)^T d.
    \end{equation}
    Since $d=\text{sign}(\nabla_x f(x))$, we have
    \begin{equation}
         \nabla_x f(x) d = \Vert \nabla_x f(x) \Vert_1.
    \end{equation}
    Therefore,
    \begin{equation}
        f(x+h\cdot d) \approx f(x) + h\Vert \nabla_x f(x) \Vert_1,
    \end{equation}
    and $\Vert \nabla_x f(x) \Vert_1$ can be approximated by
    \begin{equation}
        \left| \frac{
        f(x+h\cdot d) - f(x)
        }{
        h
        } \right|.
    \end{equation}
\end{proof}

\section{Experimental Settings}
\label{supp:train_set}
In Sec.~\textcolor{red}{5.1} in the main manuscript, part of the experimental settings are reported. In this section, we report the experimental environment and detailed experimental settings in our experiments.

\subsection{Experimental Environment}

We conducted all the training, test, and attack on an NVIDIA GeForce RTX 2080 GPU with 11GB of memory. 

\subsection{Training Settings}
For the four NR-IQA models considered in our study, namely, HyperIQA~\cite{2020_CVPR_hyperIQA}, DBCNN~\cite{2020_TCSVT_DBCNN}, LinearityIQA~\cite{2020_MM_LinearityIQA}, and MANIQA~\cite{2022_CVPRw_MANIQA}, we used publicly available code provided by their respective authors to train these models on the same training dataset. 

Due to the memory requirements associated with approximating the $\ell_1$ norm, the batch size used for training the NR-IQA models had to be adjusted to prevent memory overflow. Other training settings are shown in Table~\ref{tab:batch_size}. To ensure consistency and fairness in our comparisons, the same setting is utilized when training both the baseline and NT versions of each NR-IQA model.

\begin{table}[htbp]
\caption{Detailed training settings for NR-IQA models and their NT versions. ``Patches per Image'' is marked as ``-'' if the input of the model is the whole image}
\label{tab:batch_size}
\small
\centering
\resizebox{\linewidth}{!}{
\begin{threeparttable}
\begin{tabular}{lcccccc}
\toprule
Model & Architecture & \makecell[c]{Input\\Size} & \makecell[c]{Patches\\per Image} & \makecell[c]{Batch\\Size} & \makecell[c]{Training\\Epochs} \\ 
\midrule
  \makecell[l]{HyperIQA /\\ HyperIQA+NT} & 
 ResNet50   & 224$\times$224 &  25 & 16 & 16 \\ 
  \specialrule{0em}{.4ex}{.65ex}
  \makecell[l]{DBCNN /\\ DBCNN+NT} & 
 \makecell[c]{VGG and\\Its Variant}  & 500$\times$500 & - &6 & 50 \\ 
  \specialrule{0em}{.4ex}{.65ex}
  \makecell[l]{LinearityIQA /\\ LinearityIQA+NT}  & 
  ResNet34 & 498$\times$664 & - & 4 & 30 \\ 
  \specialrule{0em}{.4ex}{.65ex}
  \makecell[l]{MANIQA /\\ MANIQA+NT} & ViT-B/8
   & 224$\times$224 & 20 & 1 & 30 \\ 
\bottomrule
\end{tabular}
\end{threeparttable}
}\end{table}
\subsection{Normalization of MOS}
In our selected 4 NR-IQA models, MANIQA~\cite{2022_CVPRw_MANIQA} is a special NR-IQA model in which MOS is scaled to the range of $[0,1]$, and it leads to the predicted score in the range of $[0,1]$. Furthermore, the different scales of MOS in different NR-IQA models result in a difference in the RMSE metric.

For a fair comparison across different NR-IQA models, we normalize the MOS into the range $[0,100]$. The normalization formula is depressed as follows:
\begin{equation}
    {\text{MOS}}_n = \frac{\text{MOS} - S_\text{min}}{S_\text{max}  - S_\text{min}} \times 100. %
\end{equation}
In this formula, $S_\text{min}$ and $S_\text{max}$ represent the minimal and maximal MOS of the training data, respectively.
In this paper, $S_\text{min}=3.42$ and $S_\text{max}=92.43$. %

\section{(I-)FGSM for NR-IQA Tasks}
\label{sec:fgsm}
We mention the FGSM attack in Sec.~\textcolor{red}{5.1} in the main manuscript. We will introduce the details of the setting of the FGSM attack in this section.
FGSM~\cite{2015_ICLR_Goodfellow_FGSM} is first proposed for classification tasks, which is concise and efficient in attacking classification models. In our paper, we perceive FGSM as a white-box attack for NR-IQA models with a redesigned loss function. We will first introduce the FGSM attack in classification tasks and then the FGSM attack adapted to NR-IQA models below.

In the context of classification, FGSM is a straightforward non-iterative attack method, which is expressed as follows:
\begin{align}
\label{eq:FGSM}
    x_{\text{adv}} = x + \epsilon~\text{sign} (\nabla_{x} \mathcal{L} (f(x),y)).
\end{align}
In this equation, $x_{\text{adv}}$ represents the adversarial example, $x$ is the original image, $\epsilon$ denotes the $\ell_\infty$ norm bound of perturbations, $\mathcal{L}$ is the loss function, $f(\cdot)$ signifies the neural network function, and $y$ represents the true label of $x$. A common use of $\mathcal{L}$ is cross-entropy loss.

I-FGSM is an iterative extension of FGSM, which is described as follows:
\begin{equation}
    x_{\text{adv}}^k =  \Pi_\epsilon \left\{
    x_{\text{adv}}^{k-1} + \alpha~\text{sign} (\nabla_{x} \mathcal{L} (f(x),y))
    \right\},
\end{equation}
where $k$ is the current iteration step and $\alpha$ is the step size, the total number of iteration steps is $K$. 
The operator $\Pi_\epsilon$ projects the adversarial examples onto the space of the $\epsilon$ neighborhood in the $\ell_\infty$-ball around $x$.

In the NR-IQA task, we take $y$ as the predicted score of the clean image $x$.
In this paper, we choose the optimization object according to the predicted score of the image and define the loss function $\mathcal{L}$ as follows:
\begin{equation}
    \mathcal{L}(f(x),y) \triangleq \mathcal{L}_{\text{mid}} = \left\{
    \begin{aligned}
        f(x), \quad y \leqslant 50, \\
        -f(x), \quad y > 50,
    \end{aligned}
    \right.
\label{eq:supp_fgsm_used}
\end{equation}
where $f(x)$ represents the predicted score of the attacked image.
The object is to maximize the predicted score for a low-quality image, thereby misleading the IQA model into assigning a high score to the adversarial example.
Conversely, for a high-quality image, the goal is to minimize the predicted score to generate effective adversarial examples.

There is an interesting observation emerged from the experiment. 
We find that the choice of the loss function $\mathcal{L}$ has a significant impact on the efficacy of the FGSM attack. 
Specifically, we also try the mean absolute error loss:
\begin{equation}
    \mathcal{L}(f(x),y) \triangleq \mathcal{L}_{\text{mae}} = | f(x)-y |,
\label{eq:supp_fgsm_abs}
\end{equation}
and the mean squared error loss:
\begin{equation}
    \mathcal{L}(f(x),y) \triangleq \mathcal{L}_{\text{mse}} = (f(x)-y)^2.
\label{eq:supp_fgsm_mse}
\end{equation}
Taking the DBCNN as an example, we report the RMSE, SROCC, PLCC, and KROCC after the FGSM attack with different loss functions in Table~\ref{tab:IFGSM}\footnote{As for attack methods, larger RMSE and smaller SROCC, KROCC, PLCC represents stronger attack ability.} where $\epsilon=0.005$.
It is obvious that the effect of the FGSM attack is notably diminished when the loss function is $\mathcal{L}_{\text{mae}}$ or $\mathcal{L}_{\text{mse}}$.
Especially when the mean absolute loss $\mathcal{L}_{\text{mae}}$ is used, the changes of RMSE, SROCC, PLCC, and KROCC for all models are very minimal.

Investigating the relationship between the loss function and the ability of attacks is an interesting domain of research.

\begin{table}[htbp]
\caption{The attack ability of the FGSM attack with different loss functions. \textbf{Bold} denotes better value in a column}
\centering
\renewcommand\arraystretch{1.2}
\resizebox{0.8\linewidth}{!}{
\begin{tabular}{lcccc}
\toprule
 \multicolumn{5}{c}{MOS \& Predicted Score After Attack} \\ \midrule
       & RMSE$\uparrow$          & SROCC$\downarrow$    & PLCC$\downarrow$  & KROCC$\downarrow$  \\ \midrule
$\mathcal{L}_{\text{mae}}$  &
  10.0734 & 0.8994 & 0.8844 & 0.7177 \\
$\mathcal{L}_{\text{mse}}$ & 
24.354  & 0.2795 & 0.2092 & 0.2096 \\ 
$\mathcal{L}_{\text{mid}}$   & 
\textbf{ 36.758} & \textbf{-0.318} & \textbf{-0.383} & \textbf{-0.146}\\ \midrule
\multicolumn{5}{c}{Predicted Scores Before \& After Attack} \\ \midrule
       & RMSE$\uparrow$         & SROCC$\downarrow$    & PLCC$\downarrow$   & KROCC$\downarrow$\\ \midrule
$\mathcal{L}_{\text{mae}}$  &
    13.0829 & 0.754  & 0.705  & 0.6065  \\
$\mathcal{L}_{\text{mse}}$  & 
    14.5819 & 0.6351 & 0.5886 & 0.4689 \\
$\mathcal{L}_{\text{mid}}$   & 
\textbf{32.778} & \textbf{-0.333} & \textbf{-0.418} & \textbf{-0.071} \\
\bottomrule
\end{tabular}
}
\label{tab:IFGSM}
\end{table}

\section{Hyperparameters of Attacks}
In Sec.~\textcolor{red}{5} in the main manuscript, 4 attack methods are utilized. For each attack method, there are hyperparameters which affect the strength of the attack.
Table~\ref{tab:attack_para} summarizes the chosen hyperparameters in tested attack methods in the main experiment, \ie, experiments in Sec.~\textcolor{red}{5}.

\begin{table}[htbp]
\caption{Hyperparameters of attacks}
\label{tab:attack_para}
\small
\centering
\begin{threeparttable}
\begin{tabular}{ll}
\toprule

Method & Hyperparameters\\ \midrule
FGSM & one step, $\epsilon=0.005,\alpha=0.01$ \\ 
Perceptual Attack & constraint: SSIM, weight =1,000,000 \\ 
UAP$^{*}$ & scale $=0.04$ \\ 
Kor.$^{*}$ Attack & learning rate: $0.2$ \\ 
\bottomrule
\end{tabular}
\end{threeparttable}
\end{table}

The meaning of these hyperparameters is explained in the original papers of attacks: FGSM~\cite{2015_ICLR_Goodfellow_FGSM}, Perceptual attack~\cite{2022_NIPS_Zhang_PAttack}, UAP~\cite{2022_BMVC_Ekarerina_UAP} and Kor. attack~\cite{2022_QEVMAw_Korhonen_BIQA}.

\section{Further Robustness Analysis}
In this section, we will further analyze the effectiveness of the NT strategy in improving the robustness of NR-IQA models.
In Sec.~\ref{sec:supp_KROCC_PLCC}, we present the robustness of baseline models and their NT-enhanced versions measured by KROCC, PLCC, and $R$ robustness~\cite{2022_NIPS_Zhang_PAttack}.
In Sec.~\ref{sec:supp_averaged}, we report the average metrics of RMSE and SROCC improvement for both baseline models and their NT-enhanced versions.
For each model, we provide the scatter plots of predicted scores before and after the perceptual attack in Sec.~\ref{sec:supp_scatter}, which intuitively show the effectiveness of the NT strategy.

\subsection{Robustness in Terms of KROCC, PLCC and $R$ Robustness}
\label{sec:supp_KROCC_PLCC}
In Sec.~\textcolor{red}{5.2} in the main manuscript, the robustness performances in terms of RMSE and SROCC are reported. 
Table~\ref{tab:KROCC}, Table~\ref{tab:PLCC} and Table~\ref{tab:R} show the robustness performances of NR-IQA models in terms of KROCC, PLCC, and $R$ robustness against different attack methods, respectively. 
Specifically, We evaluate $R$ robustness on four baseline methods as well as their NT-trained models with $\beta_1=100, \beta_2=0$.

In Table~\ref{tab:KROCC}, NR-IQA models with the NT strategy outperform their baseline models under all attacks when KROCC is measured between predicted scores before and after attacks. Among them, HyperIQA+NT witnesses a larger improvement in KROCC compared to its baseline under the FGSM attack, with KROCC increasing from $0.043$ of HyperIQA to $0.806$ using the NT strategy. Meanwhile, MANIQA demonstrates strong robustness against the Perceptual Attack, achieving a KROCC (scores before and after the attack) value of $1$. This means Perceptual Attack could not change the rank order of predicted scores before and after the attack on MANIQA. This phenomenon is also observed in the results of SROCC robustness.

\begin{table*}[htbp]
    \caption{The \textbf{KROCC$\uparrow$} metric of NR-IQA models against attacks (with ``baseline $\big /$ baseline+NT''). \textbf{Bold} denotes better value in a cell}
    \centering
    \renewcommand\arraystretch{1.3}
    \resizebox{\textwidth}{!}{
    \begin{tabular}{lcccc|cccc}
    \toprule
      &
      \multicolumn{4}{c}{MOS \& Predicted Score After Attack} &
      \multicolumn{4}{c}{Score Before Attack   \& Score After Attack} \\ \cmidrule(lr){2-5} \cmidrule(lr){6-9}
      & {\makecell[c]{HyperIQA \\ base / +NT }}            & {\makecell[c]{DBCNN \\ base / +NT }}    & {\makecell[c]{LinearityIQA \\ base / +NT }}    & {\makecell[c]{MANIQA \\ base / +NT }}  & {\makecell[c]{HyperIQA \\ base / +NT }}            & {\makecell[c]{DBCNN \\ base / +NT }}    & {\makecell[c]{LinearityIQA \\ base / +NT }}    & {\makecell[c]{MANIQA \\ base / +NT }}\Bstrut\\ 
      \hline
    FGSM &
    0.020 $\big/$ \textbf{0.610} &
    -0.146 $\big/$ \textbf{0.136} &
    -0.197 $\big/$ \textbf{-0.184} &
    {0.296} $\big/$ \textbf{0.584} &
    0.043 $\big/$ \textbf{0.806} &
    -0.071 $\big/$ \textbf{0.217} &
    \textbf{-0.156} $\big/$ {-0.171} &
    0.332 $\big/$  \textbf{0.749}\\

    Perceptual & 
    0.627 $\big/$ \textbf{0.669} &
    -0.079 $\big/$ \textbf{0.471}    &
    0.350 $\big/$  \textbf{0.415}   &
    0.870  $\big/$  \textbf{0.876}   &
      0.837 $\big/$ \textbf{0.997}&
      -0.091 $\big/$  \textbf{0.628}   &
      0.440 $\big/$  \textbf{0.566}   &
       \textbf{1.000} $\big/$ \textbf{1.000} \\
    
    UAP$^{*}$ &
    0.548 $\big/$ \textbf{0.628} &
    0.510 $\big/$\textbf{0.568}& 
    0.526 $\big/$ \textbf{0.543}& 
    0.578 $\big/$ \textbf{0.651}& 
    0.634 $\big/$\textbf{0.797}& 
    0.643 $\big/$ \textbf{0.708}& 
    0.664 $\big/$ \textbf{0.694}& 
    0.766 $\big/$ \textbf{0.871}\\ 
    
    Kor.$^{*}$ & 
    0.614 $\big/$\textbf{0.615} & 
    \textbf{0.678}$\big/${0.669} & 
    0.585 $\big/$\textbf{0.587}& 
    0.637 $\big/$\textbf{0.658}& 
    0.724 $\big/$\textbf{0.777}& 
    0.874 $\big/$\textbf{0.895}& 
    0.777 $\big/$\textbf{0.786}& 
    0.790 $\big/$ \textbf{0.850}\\ 
    \bottomrule
    \end{tabular}
    }
    \label{tab:KROCC}
    \end{table*}

\begin{table*}[htbp]
    \caption{The \textbf{PLCC$\uparrow$} metric of NR-IQA models against attacks (with ``baseline $\big /$ baseline+NT''). \textbf{Bold} denotes better value in a cell}
    \centering
    \renewcommand\arraystretch{1.3}
    \resizebox{\textwidth}{!}{
    \begin{tabular}{lcccc|cccc}
    \toprule
      &
      \multicolumn{4}{c}{MOS \& Predicted Score After Attack} &
      \multicolumn{4}{c}{Score Before Attack   \& Score After Attack} \\ \cmidrule(lr){2-5} \cmidrule(lr){6-9}
      & {\makecell[c]{HyperIQA \\ base / +NT }}            & {\makecell[c]{DBCNN \\ base / +NT }}    & {\makecell[c]{LinearityIQA \\ base / +NT }}    & {\makecell[c]{MANIQA \\ base / +NT }}  & {\makecell[c]{HyperIQA \\ base / +NT }}            & {\makecell[c]{DBCNN \\ base / +NT }}    & {\makecell[c]{LinearityIQA \\ base / +NT }}    & {\makecell[c]{MANIQA \\ base / +NT }} \Bstrut\\ 
      \hline
    FGSM &
      -0.009 $\big/$ \textbf{0.801} &
      -0.383 $\big/$ \textbf{0.251} &
      -0.497 $\big/$ \textbf{-0.387} &
        {0.599} $\big/$ \textbf{0.861} &
      0.042 $\big/$ \textbf{0.926} &
      -0.418 $\big/$ \textbf{0.196} &
      -0.569 $\big/$ \textbf{-0.439} &
       0.535 $\big/$  \textbf{0.929}\\
    Perceptual & 
    0.830 $\big/$ \textbf{0.868}&
    -0.030 $\big/$ \textbf{0.585}    &
    0.487$\big/$  \textbf{0.528}   &
    \textbf{0.696} $\big/$  0.691   &
    0.937 $\big/$  \textbf{1.000}   &
    -0.005$\big/$  \textbf{0.719}   &
    0.522 $\big/$  \textbf{0.582}   &
    \textbf{0.998} $\big/$ 0.995 \\
    
    UAP$^{*}$ &
    0.733 $\big/$ \textbf{0.817} &
    0.701 $\big/$ \textbf{0.776}& 
    0.694 $\big/$ \textbf{0.729}& 
    {0.766} $\big/$ \textbf{0.837}& 
    0.826 $\big/$ \textbf{0.943}& 
    0.811 $\big/$ \textbf{0.884}&  
    0.805 $\big/$ \textbf{0.876}& 
    0.928 $\big/$ \textbf{0.978}\\ 
    
    Kor.$^{*}$ & 
    0.801 $\big/$ \textbf{0.806} & 
    \textbf{0.875} $\big/$ 0.868 & 
    \textbf{0.774} $\big/$ \textbf{0.774}& 
    0.838 $\big/$ \textbf{0.856}& 
    0.875 $\big/$ \textbf{0.933}& 
    0.972 $\big/$ \textbf{0.980}& 
    0.914 $\big/$ \textbf{0.933}& 
    0.942 $\big/$ \textbf{0.969}\\ 
    \bottomrule
    \end{tabular}
    }
    \label{tab:PLCC}
\end{table*}

In Table~\ref{tab:PLCC}, NR-IQA models with the NT strategy perform better than their baseline models in most cases when PLCC is measured between predicted scores before and after attacks. For example, when the attack method is the Percepural Attack, the PLCC of DBCNN is -0.005 while the PLCC of DBCNN+NT is 0.719. The only exception is MANIQA where MANIQA+NT performs worse than MANIQA when attacked by the Perceptual Attack. This trend is consistent with the results reported in RMSE robustness. 

From Table~\ref{tab:KROCC} and Table~\ref{tab:PLCC}, we can conclude that the robustness of NR-IQA models in terms of RMSE and PLCC have similar trends, while robustness in terms of SROCC and KROCC show similar patterns. Additionally, the robustness improvement caused by the NT strategy is more obvious when NR-IQA models are attacked in white-box scenarios than in black-box scenarios.

From Table~\ref{tab:R}, we can see that NR-IQA methods with our NT strategy generally perform better than their baselines. However, it's essential to note that the definition of $R$ robustness assigns a higher weight to images with extremely large scores (close to $\beta_1$) or extremely small scores (close to $\beta_2$), whereas RMSE treats each image equally. Consequently, in scenarios where the DBCNN model is attacked by Kor.~attack, the NT model shows improvement in the RMSE metric but a decrease in the $R$ robustness compared to its baseline. Similar trends are observed when the LinearityIQA model is attacked by UAP.
Although different metrics focus on different aspects, the proposed NT strategy improves all robustness metrics in most cases.

\begin{table}[tbp]
\caption{The $R\uparrow$ robustness of NR-IQA models against attacks (with ``baseline $\big /$ baseline+NT'')}
\centering
\renewcommand\arraystretch{1.2}
\resizebox{\hsize}{!}{
\begin{tabular}{lcccc}
\toprule
       & {\makecell[c]{HyperIQA \\ base / +NT }}            & {\makecell[c]{DBCNN \\ base / +NT }}    & {\makecell[c]{LinearityIQA \\ base / +NT }}    & {\makecell[c]{MANIQA \\ base / +NT }} \\ \midrule
FGSM  & {0.659} $\big/$ \textbf{1.099} &
{0.328} $\big/$ \textbf{0.671} &
{1.011} $\big/$ \textbf{1.389} &
2.957   $\big/$    \textbf{3.864}    \\

Perceptual & {2.249}$\big/$ \textbf{3.047}&
{0.938} $\big/$ \textbf{2.076}  &
{1.011} $\big/$ \textbf{1.398}   &
\textbf{5.492}$\big/$   4.784  \\

UAP$^{*}$  & {1.180} $\big/$ \textbf{1.285}&
{1.054} $\big/$ \textbf{1.067}  &
\textbf{1.161} $\big/$ 1.096   &
{3.459}  $\big/$   \textbf{3.464} \\

Kor.$^{*}$ & {0.980} $\big/$ \textbf{1.092}&
\textbf{1.333} $\big/$ 1.323  &
{0.883} $\big/$ \textbf{1.000}   &
3.299  $\big/$  \textbf{3.319} \\
\bottomrule
\end{tabular}
}
\label{tab:R}
\end{table}

\begin{figure*}[htbp]
    \centering
    \includegraphics[width=\textwidth]{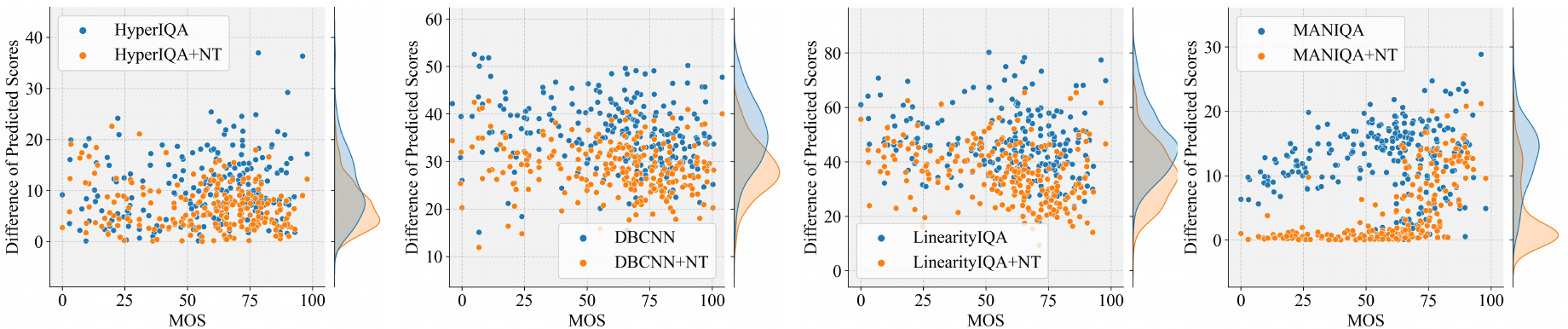}
    \caption{Comparison of four NR-IQA models with/without the NT strategy under the FGSM attack~\cite{2015_ICLR_Goodfellow_FGSM}. The absolute differences between predicted scores before and after attack for all test images are presented, with the fitted distribution displayed on the right side.}
    \label{fig:supp_scatter_FGSM}
\end{figure*}

\begin{figure*}[htbp]
    \centering
    \vspace{1.1em}
    \includegraphics[width=\textwidth]{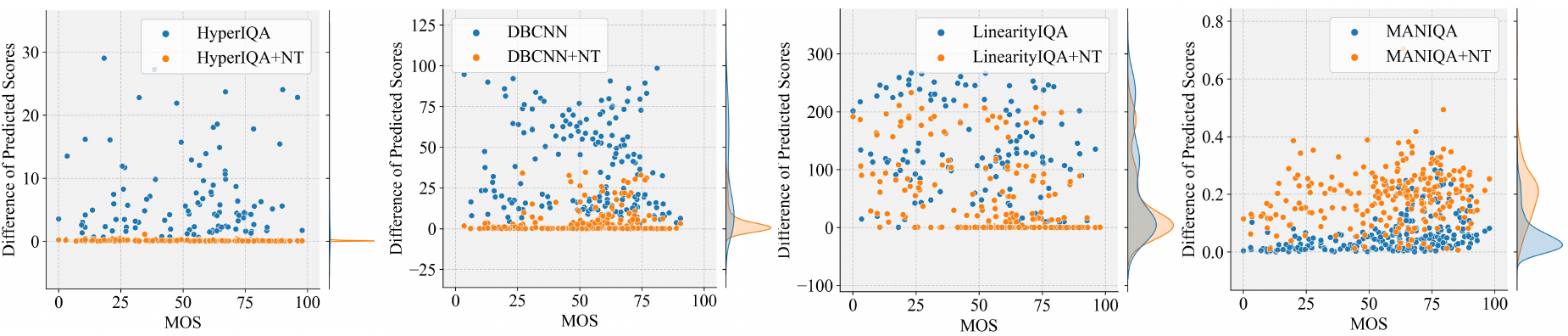}
    \caption{Comparison of four NR-IQA models with/without the NT strategy under the Perceptual attack~\cite{2022_NIPS_Zhang_PAttack}. The absolute differences between predicted scores before and after attack for all test images are presented, with the fitted distribution displayed on the right side.}
    \label{fig:supp_scatter_perceptual}
\end{figure*}

\begin{figure*}[htbp]
    \centering
    \includegraphics[width=\textwidth]{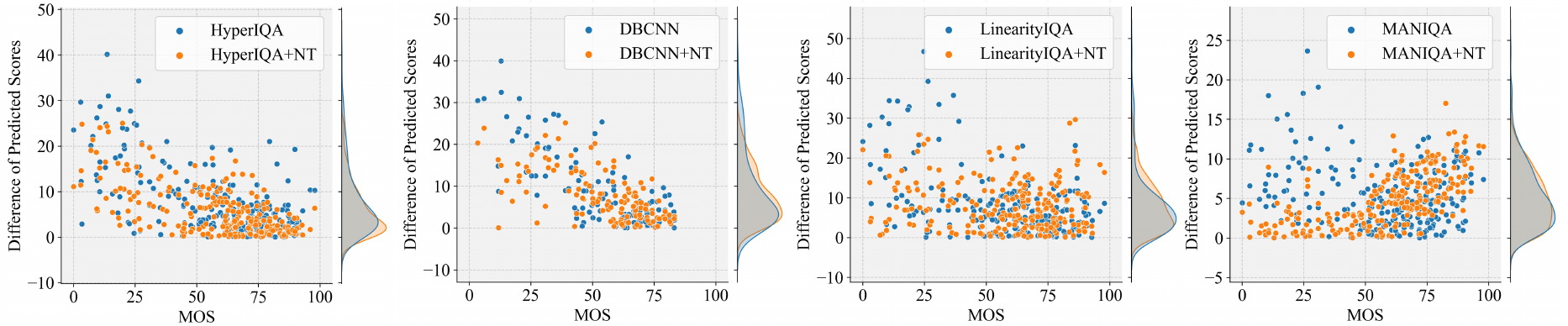}
    \caption{Comparison of four NR-IQA models with/without the NT strategy under the UAP attack~\cite{2022_BMVC_Ekarerina_UAP}. The absolute differences between predicted scores before and after attack for all test images are presented, with the fitted distribution displayed on the right side.}
    \label{fig:supp_scatter_UAP}
\end{figure*}

\begin{figure*}[htbp]
    \centering
    \vspace{2.5em}
    \includegraphics[width=\textwidth]{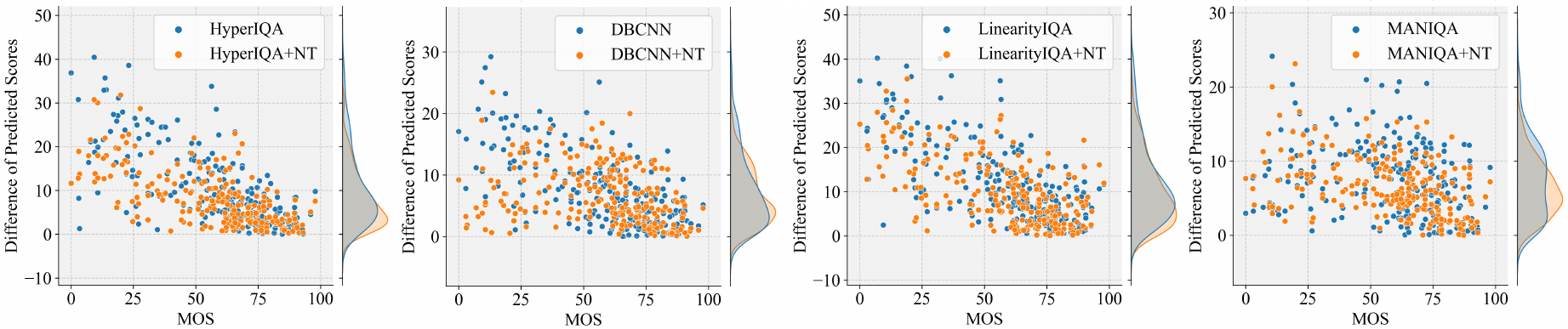}
    \caption{Comparison of four NR-IQA models with/without the NT strategy under the Kor.~attack~\cite{2022_QEVMAw_Korhonen_BIQA}. The absolute differences between predicted scores before and after attack for all test images are presented, with the fitted distribution displayed on the right side.}
    \label{fig:supp_scatter_Kor}
\end{figure*}

\subsection{Averaged Metrics of RMSE and SROCC Improvement}\label{sec:supp_averaged}
For an NR-IQA model subjected to an attack method, we calculate the difference in RMSE (or SROCC) between its NT version and the original model, denoted as $\Delta$RMSE (or $\Delta$SROCC). We then average $\Delta$RMSE and $\Delta$SROCC $\uparrow$ for both white-box and black-box attacks, as shown in Table~\ref{ReTab:overall_metric}, to corroborate \emph{Observation 4} presented in the main manuscript. These results further confirm the effectiveness of NT in mitigating both white-box and black-box attacks. %

\begin{table}[tbp]
\caption{Averaged $\Delta$RMSE $\downarrow$/ averaged $\Delta$SROCC $\uparrow$.}
\centering
\renewcommand\arraystretch{1.2}
\resizebox{\hsize}{!}{
\begin{tabular}{lcccc}
\toprule 
       & {HyperIQA}            & {DBCNN}    & {LinearityIQA}    & {MANIQA} \\ [-0.4ex] \midrule
White  &  -8.7595 /	0.4800 &	-31.5900 / 0.7465 &	-23.0075 /	0.0730 &	-4.4385	/ 0.2250\\[-0.4ex]
Black  &  -3.0215 /	0.0730 &	-2.5635 / 0.0280	&	-1.8895	/ 0.0165 &	-0.6410 / 0.0400	\\[-0.4ex]
Overall  &  -5.8905 / 0.2765	&	-17.0768 / 0.3873	&	-12.4485 / 0.0448	&	-2.5398 / 0.1325 \\[-0.4ex]			
\bottomrule
\end{tabular}
}
\label{ReTab:overall_metric}
\end{table}

\subsection{Distributions of Predicted Scores}
\label{sec:supp_scatter}

Figure~\ref{fig:supp_scatter_FGSM}--\ref{fig:supp_scatter_Kor} illustrate the absolute differences between predicted scores before and after various attacks for all test images (from the first row to the last row: FGSM, Perceptual Attack, UAP, and Kor. attack). The fitted distribution is presented on the right side of each image.

It is evident that all models trained with the NT strategy exhibit smaller score changes compared to their corresponding baseline models. Additionally, we observe an interesting trend: the NT strategy enhances robustness for different NR-IQA models at various image quality levels.

For example, considering the Perceptual Attack, all points for HyperIQA+NT closely align with the line ``difference of predicted scores = 0''. This highlights the significant effectiveness of the NT strategy in minimizing score changes with small perturbations for HyperIQA. For the DBCNN model, it is clear that the NT strategy brings about more reduction in score changes for images with MOS between $[0, 50)$ and $[75, 100]$. 

Conversely, in the case of LinearityIQA, the effectiveness of the NT strategy is more obvious on high-quality images with MOS in $[75, 100]$, while it proves more effective on low-quality images with MOS in $[0, 50]$ for MANIQA.
This discovery reflects that the NT strategy has varying impacts on images with different quality levels, and these impacts are closely tied to the NR-IQA models.
Exploring the enhancement of adversarial robustness in NR-IQA models across different image quality levels represents a valuable avenue for research. Such investigations can shed light on the properties of NR-IQA models in predicting scores for images of differing quality.

\begin{figure*}[!t]
    \centering
    \includegraphics[width=0.8\textwidth]{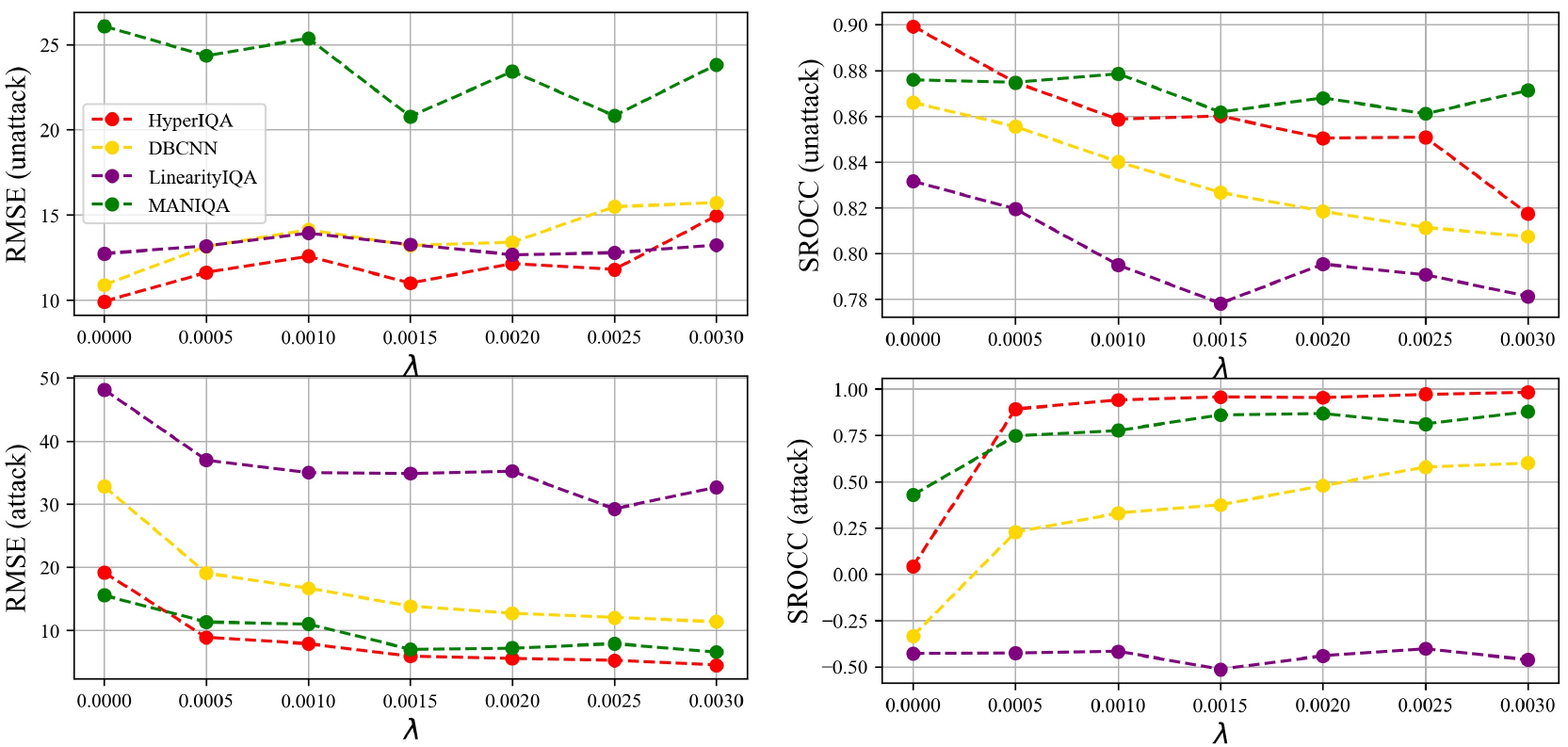}
    \caption{The impact of $\lambda$ to SROCC and RMSE on both unattacked images and adversarial examples.}
    \label{fig:supp_lambda}
\end{figure*}

\section{Full Results of Ablation Studies}
In Figure~\ref{fig:supp_lambda}, we show the full results of the ablation study of $\lambda$ (mentioned in Sec.~\textcolor{red}{5.5} in the main manuscript). Our analysis focuses on two aspects of an NR-IQA model: its performance on unattacked images and its robustness against attacks. For the former, we utilize SROCC on unattacked images across MOS values and predicted scores, and for the latter, we employ the RMSE between predicted scores before and after the FGSM attack. 

As $\lambda$ increases, the performance of baseline+NT models has the following trend on unattacked images. SROCC values generally decrease with the rising $\lambda$, while RMSE values exhibit an upward trend (except for MANIQA). It is an interesting observation that the RMSE value of MANIQA fluctuates as $\lambda$ changes, and the RMSE tends to decrease with larger $\lambda$.
When attacked by the FGSM attack, the RMSE values of all baseline+NT models decrease consistently with the increase of $\lambda$. Except for LinearityIQA, the SROCC values of other models increase as $\lambda$ becomes larger. This implies that increasing $\lambda$ tends to enhance the robustness of NR-IQA models but leads to a performance decline on unattacked images.

\section{Visualization Results} 
In this section, we present visualization results to illustrate the effectiveness of the NT strategy under FGSM attack with different attack intensities and UAP. Under FGSM attack with different attack intensities, for each pair of baseline and baseline+NT models, we provide two sets of visualization results: one for high-quality images and the other for low-quality images. We show the normalized MOS of the original image. Under the UAP attack, we provide one adversarial sample for an NR-IQA model.  We display adversarial examples for both the baseline model and the baseline+NT model, along with the corresponding score changes (predicted score before attack $\rightarrow$ predicted score after attack). 

\begin{figure*}[!thb] 
    \centering
    \includegraphics[width=0.76\linewidth]{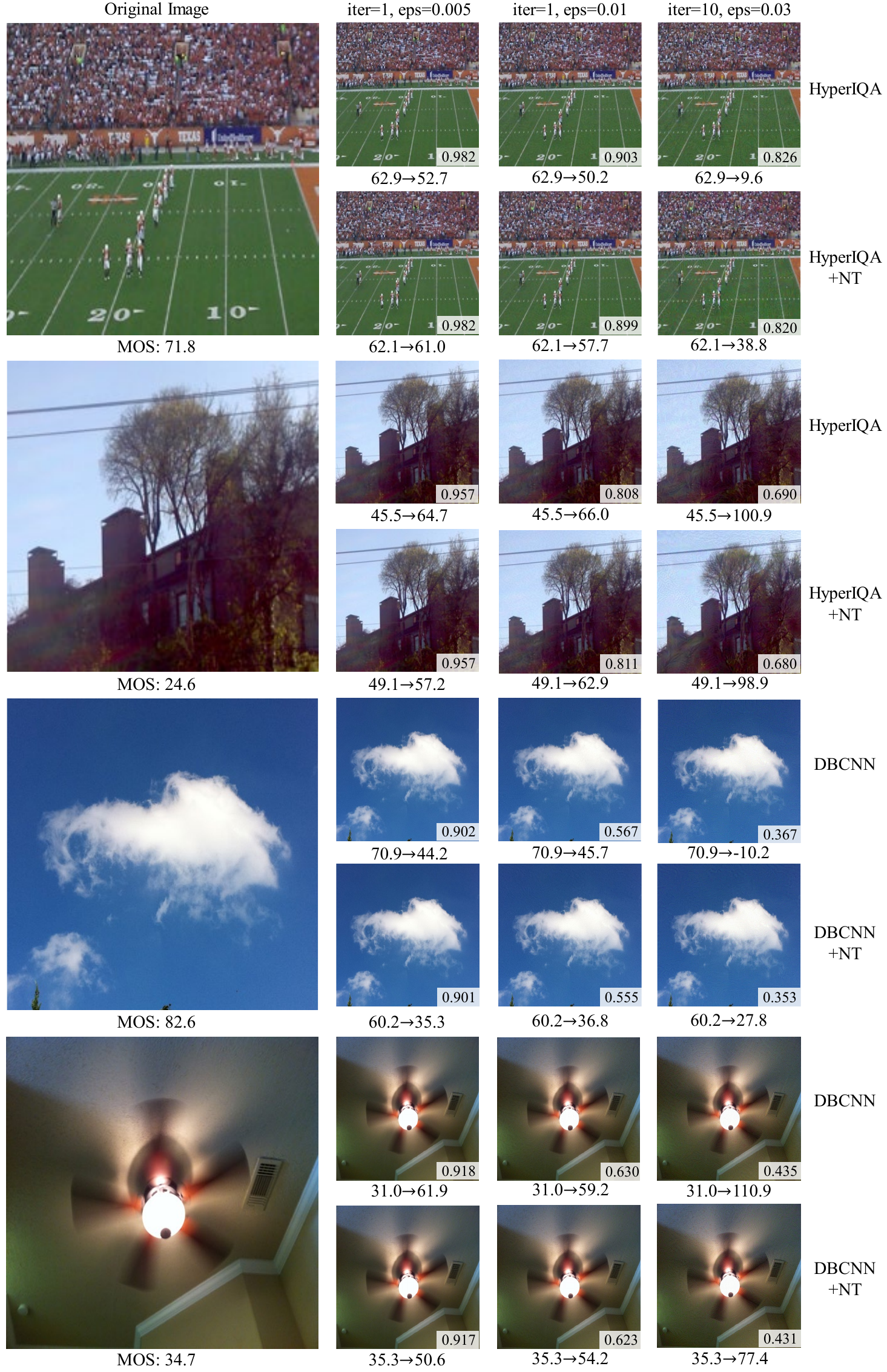}
    \caption{(Zoom in for a better view) Visualization results of adversarial examples generated using the FGSM with different intensities. The normalized MOS is presented. FGSM attack settings are indicated above the figures, and each adversarial example for a model is presented with the format: ``predicted score before attack $\rightarrow$ predicted score after attack'' below the respective images. The SSIM between the adversarial example and the original image is displayed at the bottom right corner of each adversarial image.}
    \label{fig:visual_1}
\end{figure*}

\begin{figure*}[!thb] 
    \centering
    \includegraphics[width=0.78\linewidth]{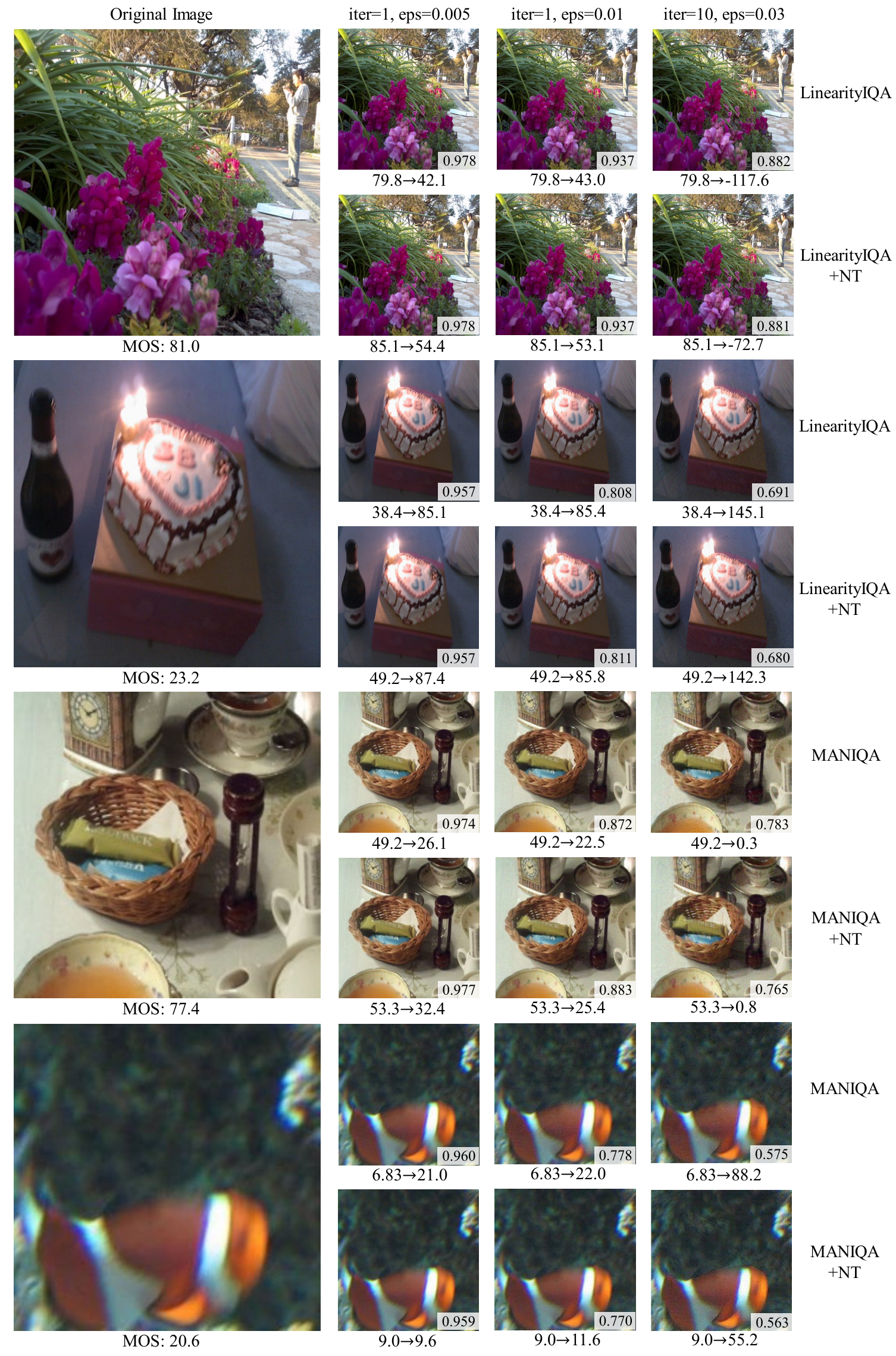}
    \caption{(Zoom in for a better view) Visualization results of adversarial examples generated using the FGSM with different intensities. The normalized MOS is presented. FGSM attack settings are indicated above the figures, and each adversarial example for a model is presented with the format: ``predicted score before attack $\rightarrow$ predicted score after attack'' below the respective images. The SSIM between the adversarial example and the original image is displayed at the bottom right corner of each adversarial image.}
    \label{fig:visual_2}
\end{figure*}

\begin{figure*}[!thb] 
    \centering
    \includegraphics[width=0.78\linewidth]{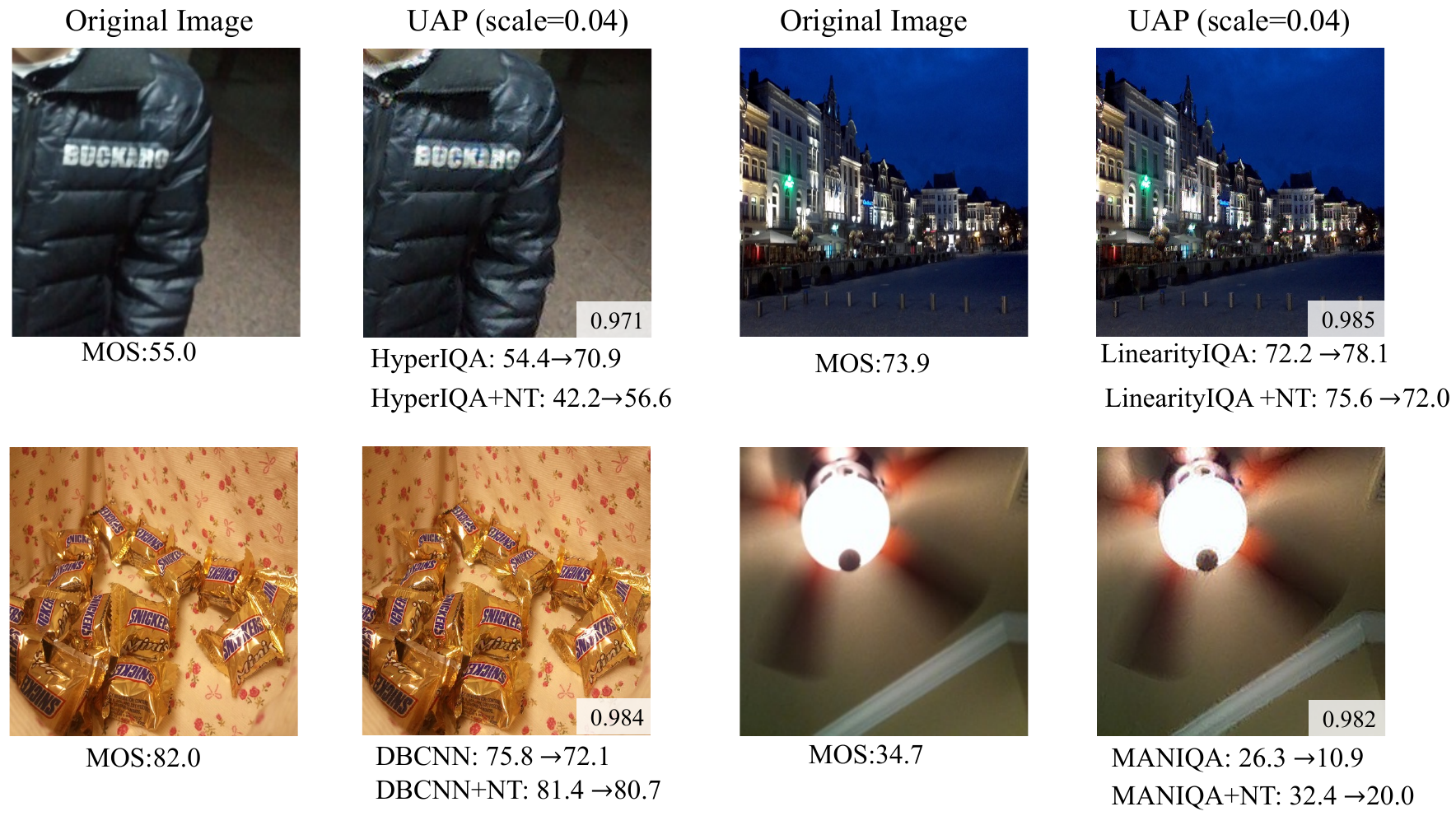}
    \caption{(Zoom in for a better view) Visualization results of adversarial examples generated using the UAP attack. The normalized MOS is presented. Each adversarial example for a model is presented with the format: ``predicted score before attack $\rightarrow$ predicted score after attack'' below the respective images. The SSIM between the adversarial example and the original image is displayed at the bottom right corner of each adversarial image.}
    \label{fig:visual_uap}
\end{figure*}

Figure~\ref{fig:visual_1} shows visualization results of FGSM attack for HyperIQA, DBCNN, and their NT versions. Figure~\ref{fig:visual_2} displays visualization results of FGSM attack for LinearityIQA, MANIQA, and their NT versions.  Figure~\ref{fig:visual_uap} shows visualization results of UAP attack for HyperIQA, DBCNN, LinearityIQA, MANIQA, and their NT versions.

From Figure~\ref{fig:visual_1} and Figure~\ref{fig:visual_2}, we can see that the imperceptibility of adversarial perturbations for images gets worse as the attack intensity increases. However, despite this, the NT models consistently exhibit smaller score changes than the baseline models in most cases. Consider HyperIQA and its NT version as an example. When attacked by the strongest FGSM attack (iter=10, eps=0.01), the score change for HyperIQA+NT on the high-quality image is $62.1-38.8=23.3$, whereas the score change for HyperIQA is $62.9-9.6=53.3$. Similarly, for the low-quality image, the score change for the NT version is $49.8$, while the change for the baseline model is $55.4$. From Figure~\ref{fig:visual_uap}, we can see that with the same adversarial sample, baseline and their NT versions have different defense performances. For example, the score change for HyperIQA+NT on its adversarial sample is $56.6-42.2=14.4$, whereas
HyperIQA on the same adversarial sample is $70.9-54.4=16.5$.

\end{document}